\theoremstyle{definition}
\theoremstyle{plain}
\newtheorem{definition}{Definition}[section]
\newtheorem{theorem}{Theorem}
\declaretheorem[name=Theorem,numbered=no]{theorem*}
\newtheorem{remark}{Remark}
\declaretheorem[name=Statement,numbered=no]{statement*}
\newtheorem{corollary}{Corollary}
\newtheorem{example}{Example}[section]
\newtheorem{lemma}{Lemma}
\newtheorem{prop}{Proposition}
\algrenewcommand{\alglinenumber}[1]{\textcolor{violet}{#1}}
\title{
COALA: Numerically Stable and Efficient Framework for Context-Aware Low-Rank Approximation
}
\author{%
  Uliana Parkina \\
    HSE University \\
    \texttt{uliana.parkina@gmail.com}
    \And 
    Maxim Rakhuba \\
    HSE University \\
}
\begin{document}

\maketitle

\begin{abstract}
Recent studies suggest that context-aware low-rank approximation is a useful tool for compression and fine-tuning of modern large-scale neural networks.
In this type of approximation, a norm is weighted by a matrix of input activations, significantly improving metrics over the unweighted case.
Nevertheless, existing methods for neural networks suffer from numerical instabilities due to their reliance on classical formulas involving explicit Gram matrix computation and their subsequent inversion. 
We demonstrate that this can degrade the approximation quality or cause numerically singular matrices.

To address these limitations, we propose a novel \emph{inversion-free regularized framework} that is based entirely on stable decompositions and overcomes the numerical pitfalls of prior art. 
Our method can handle possible challenging scenarios: (1)~when calibration matrices exceed GPU memory capacity, (2)~when input activation matrices are nearly singular, and even (3)~when insufficient data prevents unique approximation.
For the latter, we prove that our solution converges to a desired approximation and derive explicit error bounds.
\end{abstract}

\section{Introduction}

Large Language Models (LLMs) have demonstrated high performance across a variety of tasks~\citep{DBLP:journals/corr/abs-2303-18223}, leading to significant advancements in artificial intelligence. However, the increasing size of these models brings efficiency challenges, including inference speed and model size when resources are constrained~\cite{zhu2024survey, wan2023efficient, DRONE, zhou2024survey}.
To address these issues, various approaches have been proposed, such as model compression and fine-tuning. 
Specifically, basic and context-aware low-rank approximation techniques have proven to be an effective tool for compressing~\cite{SVD-LLM, ASVD, SoLA, DRONE, AdaSVD, hsu2022language} and fine-tuning~\cite{PiSSA, CorDA, MiLoRA, GaLoRA} modern large-scale neural networks. 

In the context-aware approach, we consider the task of approximating a weight matrix~\( W~\in~\mathbb{R}^{m \times n } \)~given input data \( X~\in~\mathbb{R}^{n \times k} \), where $k$ is the batch size multiplied by the context length. 
The goal is to find a low-rank approximation~$W'$ of \( W \) that maintains the performance of the neural network while reducing its computational complexity.
This objective leads to the minimization of
\[
L(W') = \| WX - W'X \|_F,
\]
where \( \| \cdot \|_F \) denotes the Frobenius norm. 
We aim to minimize \( L(W') \) with respect to the matrix $W'~\in~\mathbb{R}^{m\times n}$, under the constraint that the rank of $W'$ does not exceed \( r \).

 Despite its seeming simplicity, the context-aware low-rank approximation poses various challenges from the computational point of view:

\paragraph{Numerical instabilities.} The first source of difficulty stems from numerical instability. Prior methods~\cite{SVD-LLM, SoLA, CorDA, DRONE} frequently depend on the inversion of large or nearly singular Gram matrices of the columns of $X$, which can degrade performance and introduce substantial computational errors in practice~\cite{SVD-LLM2, AdaSVD}. 
Although theoretical guarantees for such inversion-based strategies often assume that the Gram matrix is of full rank, this condition can fail under real-world constraints and floating-point arithmetic~\cite{SVD-LLM2, AdaSVD}.  

\paragraph{Large calibration datasets.} Another challenge relates to memory limitations, particularly noticeable in large-scale scenarios. 
For instance, calibrating LLaMA3-8B~\cite{LLama3} using $100$ examples of length $2048$ tokens and an internal dimensionality of $14336$ leads a $\approx 10.9 Gb$ matrix $X$ in a single precision. 
Thus, the method should be memory-efficient and, if necessary, support batch processing without explicitly constructing the whole matrix $X$.

\paragraph{Limited data.} A final challenge emerges when dealing with  severely limited data. In low-data regimes (e.g., 3–5 images in generative model adaptation~\cite{han2023svdiff, gorbunov2024group, ruiz2023dreambooth}), the problem becomes ill-posed and susceptible to non-unique solutions and overfitting. Similar difficulties appear in model compression tasks when only constrained datasets are available~\cite{bai2020few, lopes2017data}.

In this paper, we overcome all these issues in a single framework, called \emph{COALA} (\textbf{CO}ntext-\textbf{A}ware \textbf{L}ow-rank \textbf{A}pproximation).
Our contributions are as follows.
\begin{itemize}
     \item We propose to use a regularization term that balances fitting the available examples with preserving the model’s capacity to generalize. This regularized formulation yields unique solution for any $X$. Moreover, it boosts metrics of compressed neural networks by mitigating overfitting. Under certain assumptions, we establish a theoretical convergence of the regularized solution.
         \item We show how to fully avoid both inversion and computation of Gram matrices, which are prone to numerical instabilities~\cite{demmel1997applied}. Also, to avoid computational challenges associated with large matrices $X$, we preprocess them via the reliable TSQR algorithm~\cite{demmel2012communication}, which computes a QR decomposition in smaller chunks.

\end{itemize}

\section{Related Work}

In recent years, the compression of deep learning models has become a critical focus within the field. 
Several approaches have been proposed to address this challenge, including quantization~\citep{DBLP:conf/iclr/YuanSD24, DBLP:conf/icml/HuangLQLZ0M024}, structured pruning~\citep{DBLP:conf/nips/MaFW23, SliceGPT, DBLP:journals/corr/abs-2406-10594}, and low-rank approximation methods~\citep{DBLP:conf/iclr/HsuHCLSJ22, DBLP:journals/corr/abs-2312-05821, DBLP:journals/corr/abs-2403-07378}. 
Quantization allows for reducing the bit-width of model weights, thereby decreasing memory consumption and accelerating computations.
Structured pruning removes unnecessary parameters and simplifies the model architecture without significant loss of accuracy.
Also, the low rank decomposition approach is often memory-efficient and can accelerate model inference, which is particularly important for tasks related to response speed and model size when deploying on mobile devices~\cite{ASVD, DRONE, liu2023deja, sun2023simple, cheng2017survey, zhu2024survey}.

The primary concept behind model compression using low-rank approximations is to represent the weight matrix \( W \) as the product of two low-rank matrices: $W = UV$, where $W \in \mathbb{R}^{m \times n}, U \in \mathbb{R}^{m \times r}, V \in \mathbb{R}^{r \times n}$.
This representation allows for storing $\mathcal{O}(mr+nr)$ elements instead of the original \( \mathcal{O}(mn) \), and enables the propagation of data \( X \in \mathbb{R}^{n \times k} \) through a layer with computational complexity \( \mathcal{O}(\smash{nkr+mkr}) \) rather than \( \mathcal{O}(mnk) \), which is advantageous when \( r \ll m, n \).

In this context, the theory of low-rank matrix approximations is developed to preserve certain properties of the original matrix.
For instance, the Eckart–Young-Mirsky theorem~\cite{GOLUB1987317}, based on the singular value decomposition (SVD)~\cite{GOLUB1987317} of a matrix, allows for the construction of a low-rank matrix \( W' \) that best approximates \( W \) in the sense of minimizing the norm \( \|W - W'\| \) for any unitarily invariant norm, such as the Frobenius norm.
However, as demonstrated in works~\cite{SVD-LLM, ASVD}, such approximations do not always efficiently preserve the model's performance and are outperformed by compression methods based on alternative ideas, such as quantization, structured pruning, and unstructured pruning.

The work ASVD~\cite{ASVD} proposes a solution that manages activation outliers by transforming the weight matrix based on the activation distribution.
However, this solution does not achieve the best approximation error in the posed problem, providing a reasonable yet suboptimal solution~\cite{SVD-LLM}.
Other studies, such as~\cite{SVD-LLM,SVD-LLM2},~\cite{SoLA} and~\cite{DRONE}, present solutions that attain the theoretical minimum of the error in the Frobenius norm.
Nonetheless, they still rely on the formation of Gram matrices and/or inversion of small singular values.

\section{Weighted Low-Rank Approximation}
Building upon the previous discussion, by introducing a \emph{context-aware} approach at each layer using~\( X \), we aim to reduce the number of parameters needed to store the matrix \( W \) by finding its low-rank approximation. 
Formally, this can be formulated as the following minimization problem:
\begin{equation}\label{eq:formulation}
        \min_{\operatorname{rank}(W') \leq r} \|(W - W')X\|_F.
\end{equation}
This problem is a special case of the general weighted low-rank approximation problem~\cite{manton2003geometry, markovsky2008structured}:
\begin{gather*}
    \min_{\operatorname{rank}(W') \leq r} \operatorname{vec}\{{W - W'}\}^\top Q\operatorname{vec}\{W - W'\},
\end{gather*}
where $Q$ is positive definite symmetric matrix and $\operatorname{vec}\{\cdot\}$ denotes the column-wise vectorization of a matrix. 
By applying~\cite[Theorem 3]{manton2003geometry} (see Appendix~\ref{appx:kron}) in our specific case of the matrix $Q$, we obtain:
\begin{gather}
\label{eq:general_solution}
    W' = U\Sigma_rV^\top S^{-1},
\end{gather}
where $S = (XX^\top)^{1/2}$ is the unique positive definite square root of $XX^\top$, $U \Sigma V^\top$ is the SVD of $WS$, and \(\Sigma_r\) is the matrix obtained from \(\Sigma\) by setting the last \(n - r\) singular values to zero.

One can show that using  the symmetric matrix square root is not the only possible way to obtain the solution. Any decomposition of the form $SS^\top = XX^\top$ with a square matrix $S$ is applicable as well. 
For example, $S$ can be an $R^\top$ factor from the Cholesky decomposition of the matrix $XX^\top$. 
Alternatively, it can be based on SVD of $XX^\top$.
For example, these two approaches were used in~\cite{SVD-LLM2, SVD-LLM} and utilized in other studies~\cite{SoLA, CorDA}, see Appendix~\ref{appx:algs}.
As we will see further, forming the Gram matrix in this context may already lead to numerical problems in the ill-conditioned case, see also a theoretical example from Appendix~\ref{appx:example_error}.
Inverting nearly singular matrices afterwards only deteriorate this effect.
In the next section, we show how to naturally avoid both problems at once.

\section{Inversion-Free Solution}

The following result provides a simple yet effective orthogonal-projection-based formula, avoiding matrix inversion and Gram matrices.
\begin{prop}
\label{prop:main_without_reg}
Let \( W \in \mathbb{R}^{m \times n} \) and \( X \in \mathbb{R}^{n \times k} \) be arbitrary matrices. 
A solution to the optimization problem
\begin{equation}
\label{eq:main_without_reg_task}
\min_{\operatorname{rank}(W') \leq r} \| WX - W'X \|_F
\end{equation}
is attained at \( W' = U_r U_r^\top W \), where \( U_r \) consists of the first \( r \) left singular vectors of the matrix \( WX \).
\end{prop}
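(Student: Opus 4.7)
The strategy is to reduce the problem to the classical Eckart–Young–Mirsky theorem applied to the matrix $WX$ itself, and then verify that the proposed $W'$ realizes the optimum.

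First, I would establish a lower bound. For any $W' \in \mathbb{R}^{m\times n}$ with $\operatorname{rank}(W')\le r$, the product $Z := W'X$ has rank at most $r$. Hence
\[
\min_{\operatorname{rank}(W')\le r}\|WX-W'X\|_F \;\ge\; \min_{\operatorname{rank}(Z)\le r}\|WX-Z\|_F,
\]
and by Eckart–Young–Mirsky the right-hand side equals $\|WX-(WX)_r\|_F$, where $(WX)_r = U_r\Sigma_r V_r^\top$ is the rank-$r$ truncated SVD of $WX$.

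Next, I would show that the candidate $W' = U_rU_r^\top W$ attains this bound. Writing $WX = U\Sigma V^\top$ and partitioning $U = [U_r\ U_{\perp}]$, orthogonality of $U$ gives $U_r^\top U = [I_r\ 0]$, so that
\[
U_rU_r^\top WX \;=\; U_r\,[I_r\ 0]\,\Sigma V^\top \;=\; U_r\Sigma_rV_r^\top \;=\; (WX)_r.
\]
Since $\operatorname{rank}(U_rU_r^\top W)\le \operatorname{rank}(U_r)=r$, the candidate is feasible, and the above identity shows it achieves the Eckart–Young lower bound. Combining the two directions yields optimality.

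I do not expect a genuine obstacle here: the argument is essentially the observation that multiplying by $X$ on the right can only shrink rank, so the outer minimization is bounded below by the unconstrained rank-$r$ approximation of $WX$, and the orthogonal projector $U_rU_r^\top$ onto the top-$r$ left singular subspace of $WX$ happens to act on $WX$ exactly as the truncation operator. The only thing to be mildly careful about is the SVD size bookkeeping (treating the cases $r\le\min(m,k)$ cleanly, with $\Sigma_r$ the $r\times r$ diagonal of the top singular values and $V_r$ the corresponding first $r$ right singular vectors), but this is a routine check rather than a substantive difficulty.
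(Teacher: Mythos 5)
Your proposal is correct and takes essentially the same route as the paper: both reduce the problem to the Eckart--Young--Mirsky theorem by observing that $W'X$ has rank at most $r$, then verify that $W'=U_rU_r^\top W$ realizes the optimal rank-$r$ truncation of $WX$. Your version is slightly more explicit in separating the lower-bound and attainment steps, but the underlying argument is identical.
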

\begin{proof}
Let us define \( A = WX \) and \( B = W'X \). Then,
\[
\operatorname{rank}(B) \leq \min\left( \operatorname{rank}(W'),\, \operatorname{rank}(X) \right) \leq \operatorname{rank}(W') \leq r.
\]
It is well-known that the minimizer of \( \| A - B \|_F \) under the constraint \( \operatorname{rank}(B) \leq r \) is given by \( B = U_r U_r^\top A \), where \( U_r \) contains the first \( k \) left singular vectors of \( A \) (see  Corollary~\ref{short_formula} for details).
Substituting back for \( A \) and \( B \), we have
$B = U_r U_r^\top A = U_r U_r^\top WX$,
implying
\[
W'X = U_r U_r^\top WX.
\]
Hence, one of the possible solutions \( W' \) looks as follows:
\[
W' = U_r U_r^\top W.
\]
As desired, the rank of \( W' \) does not exceed \( r \) because \( U_r \) has rank \( r \). Note that in general there can be many solutions depending on the matrix $X$.
\end{proof}
Although this result is well-established for the unweighted case ($X=I$), we include a proof here since we were unable to find a weighted analogue in the literature.
Note that this formula does not require any additional constraints on~\( X \), such as the assumption of full column rank, which is required in~\cite{SVD-LLM, SVD-LLM2}.
However, let us note that the number of columns \( k \) of the matrix \( X \) grows with the number of samples and can be a fairly large quantity, exceeding \( m \) and \( n \) by many times.
Nevertheless, it can can be efficiently computed with the help of the reliable QR decomposition.
\begin{prop}
\label{prop:HR_hack}
Suppose that $n \leq k$. 
Then, we can get $U_r$ in Proposition~\ref{prop:main_without_reg} as the first $r$ left singular vectors of the matrix $WR^\top$, where $R$ is the upper triangular matrix from the QR decomposition of~$X^\top$.
\end{prop}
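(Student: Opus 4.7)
The plan is to reduce the SVD of the (possibly very wide) matrix $WX$ to the SVD of the much smaller matrix $WR^\top$ by using the QR decomposition of $X^\top$ to peel off a factor with orthonormal columns.

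First, since $n \le k$, the matrix $X^\top \in \mathbb{R}^{k \times n}$ is tall, so it admits a thin QR decomposition $X^\top = QR$ with $Q \in \mathbb{R}^{k \times n}$ having orthonormal columns ($Q^\top Q = I_n$) and $R \in \mathbb{R}^{n \times n}$ upper triangular. Transposing gives $X = R^\top Q^\top$, so $WX = (WR^\top)\,Q^\top$.

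Next, I would take an SVD $WR^\top = U \Sigma V^\top$ and substitute to obtain $WX = U \Sigma V^\top Q^\top = U \Sigma (QV)^\top$. The key observation is that $QV$ has orthonormal columns, because $(QV)^\top(QV) = V^\top Q^\top Q V = V^\top V = I_n$. Hence $U \Sigma (QV)^\top$ is a valid (thin) SVD of $WX$: its left factor $U$ has orthonormal columns, its right factor $QV$ has orthonormal columns, and $\Sigma$ is the nonnegative diagonal of singular values. Consequently, the first $r$ left singular vectors of $WX$ coincide with the first $r$ left singular vectors of $WR^\top$, up to the usual non-uniqueness when singular values coincide. Plugging this $U_r$ into Proposition~\ref{prop:main_without_reg} then yields the claim.

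The only mildly delicate point is ensuring that $U\Sigma(QV)^\top$ actually qualifies as an SVD for the purpose of extracting \emph{left} singular vectors; this is clear because $U$ is orthogonal and $QV$ is orthonormal-columned with the same number of columns as the diagonal of $\Sigma$, so the left singular vectors and singular values are uniquely determined (in the usual sense) by this factorization. No additional assumption on $X$ is required beyond $n \le k$, and no Gram matrix or inverse appears, which is precisely the numerical benefit we want.
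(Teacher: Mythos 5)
Your proof is correct but takes a genuinely different route from the paper. The paper shows that the \emph{objective functions} are identical: writing $X = R^\top Q^\top$ with $Q^\top Q = I$, one has
\[
\|(W'-W)X\|_F^2 = \operatorname{tr}\bigl((W'-W)R^\top Q^\top Q R (W'-W)^\top\bigr) = \|(W'-W)R^\top\|_F^2,
\]
so the two minimization problems coincide and Proposition~\ref{prop:main_without_reg} can be invoked directly with $R^\top$ in place of $X$. You instead relate the two \emph{SVDs} directly: from $WR^\top = U\Sigma V^\top$ you observe that $U\Sigma(QV)^\top$ is a valid (thin) SVD of $WX$, since $QV$ inherits orthonormal columns from $Q$, so the left singular vectors agree. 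Both arguments are short and sound. The paper's route has a small advantage in flexibility: it only uses $R^\top R = X X^\top$, so, as the paper remarks immediately after the proof, any square factor $S$ with $SS^\top = XX^\top$ works in place of $R^\top$; your argument is tied to the specific QR factorization $X^\top = QR$. Conversely, your route makes the identification of the $U_r$ factors more transparent, rather than passing through equality of objectives. Your caveat about non-uniqueness when $\sigma_r = \sigma_{r+1}$ is handled correctly — any valid choice of $U_r$ satisfies Proposition~\ref{prop:main_without_reg}, which itself already admits multiple minimizers in that case.
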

\begin{proof}
Let $QR = X^\top$ be the QR decomposition of $X^\top$. Then, using orthogonal invariance of $\|\cdot\|_F$:
\[
\begin{split}
    \| (W' &- W) X \|^2_F = \| (W' - W) R^\top Q^\top \|_F^2 = \\ 
    &= \operatorname{tr}\left((W' - W)R^\top Q^\top Q R(W'-W)^\top\right) = \left[ Q^\top Q = I\right] =  \\ 
    & = \operatorname{tr}\left((W' - W)R^\top R(W'-W)^\top\right) = \|(W' - W)R^\top\|^2_F.
\end{split}
\]
We complete the proof by applying Proposition~\ref{prop:main_without_reg}
to the new minimization task.
\end{proof}
    Note that in the proof of Proposition~\ref{prop:HR_hack}, 
    we only use the fact that $R^TR = XX^T$, so any matrix for which this is true will suffice. 

The pseudocode of the final solution is summarized in Algorithm~\ref{alg:find_easy}. 
\begin{algorithm}
\caption{\textcolor{violet}{A Stable Solution to the Weighted Low-Rank Approximation Problem}}
\label{alg:find_easy}
\begin{algorithmic}[1]
\Require $W \in \mathbb{R}^{m \times n}$, $X \in \mathbb{R}^{n \times k}$, $r \in \mathbb{N}$, $n \leq k$
\Ensure $A \in \mathbb{R}^{m \times r}$, $B \in \mathbb{R}^{r \times n}$
\State \textbf{Compute} the upper–triangular factor $R$ by performing a TSQR factorization of $X^\top$:
\Statex \quad $[Q, R] \;\gets\; \textsc{QR}(X^\top)$
\Comment{{\footnotesize\color{gray}Use the Tall–Skinny QR (TSQR) method, see Section~\ref{effecien}.}}
       
\State \textbf{Compute} the SVD of $W R^\top$:
\Statex \quad $[U, \Sigma, V^\top] \gets \operatorname{\texttt{SVD}}(W R^\top)$
\State \textbf{Let} $U_r = U[:, :r]$
\State \textbf{Set} $A \gets U_r$
\State \textbf{Set} $B \gets U_r^\top W$
\State \Return $A$, $B$
\end{algorithmic}
\end{algorithm}

\subsection{Stability}
Let us analyze potential issues that arise on real-world data.  
In particular, we use the LLaMA3~\cite{LLama3} model on the WikiText2~\cite{merity2016pointer} dataset and construct weighted low-rank approximation of matrices in three ways: (1) via Cholesky decomposition of \((XX^\top)\) as in SVD-LLM, (2) via \texttt{SVD} of $(XX^\top)$ as in SVD-LLM v2, and (3) via the QR-based approach.  

Figure~\ref{errors} shows that the approaches relying on the Gram matrix suffer from large errors that are independent of the chosen rank and appear already during the construction of the approximation of \(W\). We evaluate the error in the spectral norm \(\|\cdot\|_2\), the operator norm induced by the Euclidean vector norm via \(\|A\|_2 = \sup_{x \neq 0} \|Ax\|_2/\|x\|_2\). Being defined through a supremum over all possible inputs, this bound cannot be exceeded by any particular vector \(x\).

The size of these errors is linked to the distribution of singular values: very small singular values cause numerical instabilities when inversion of the Gram matrix is involved. As illustrated in Figure~\ref{sings}, several layers exhibit a sharp drop in the smallest singular values of the input matrix \(X\). Our findings indicate that computing \( XX^\top \) introduces noticeable numerical errors, which may subsequently impact the final results.

\begin{figure}[ht]
    \centering
    \begin{minipage}[t]{0.49\textwidth}
        \centering
        \includegraphics[width=\textwidth]{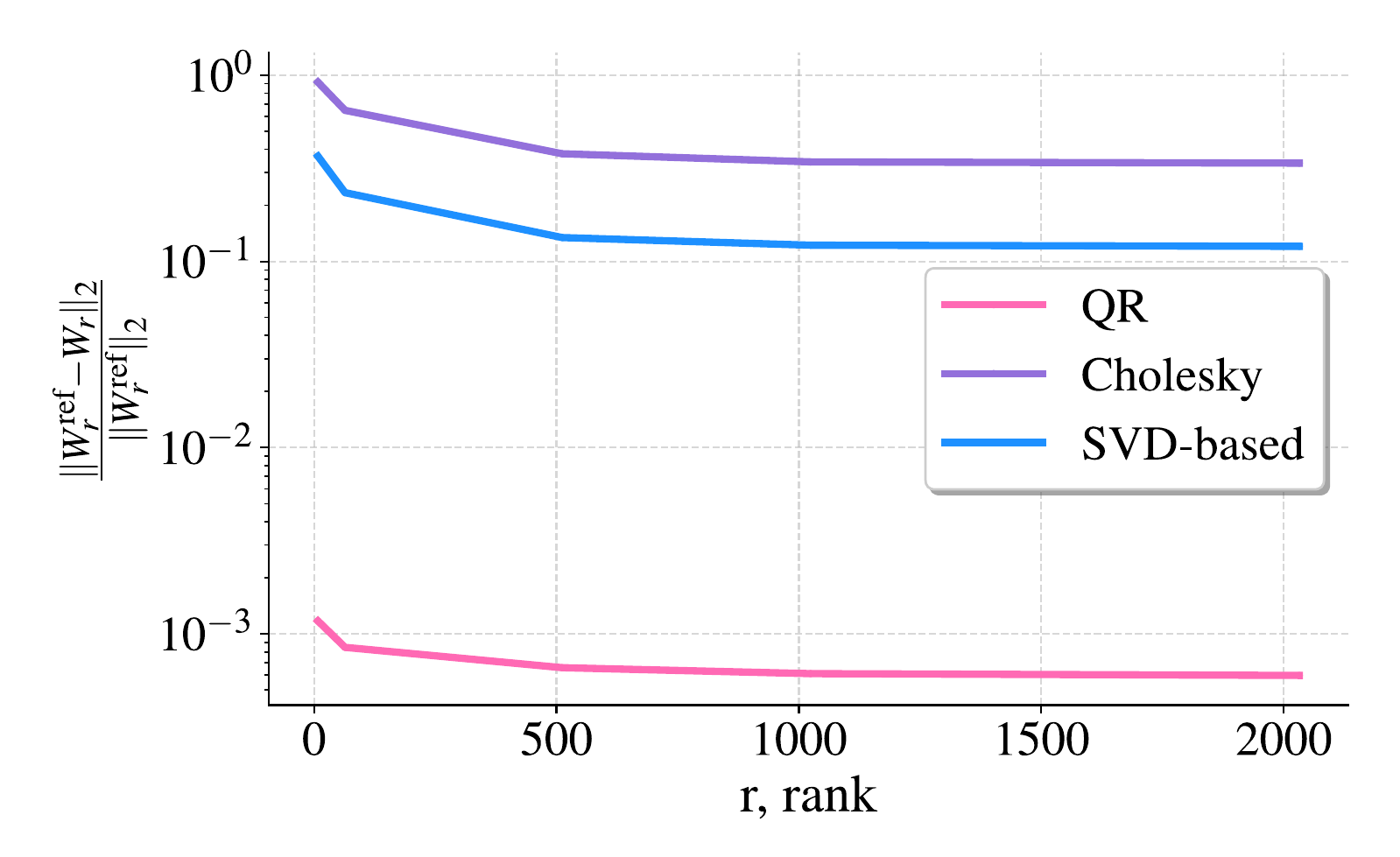}
        \caption{Relative approximation error versus approximation rank obtained by different methods on layer 1 \(\texttt{q\_proj}\). The reference weight matrix $W^{\text{ref}}_r$ was computed using the inversion-free COALA method and in  high working precision (fp64) to serve as the ground-truth solution. The LLaMA3-1B~\cite{LLama3} model was used with 64 examples from the Wikitext~\cite{merity2016pointer} dataset.}
        \label{errors}
    \end{minipage}
    \hfill
    \begin{minipage}[t]{0.49\textwidth}
        \centering
        \includegraphics[width=\textwidth]{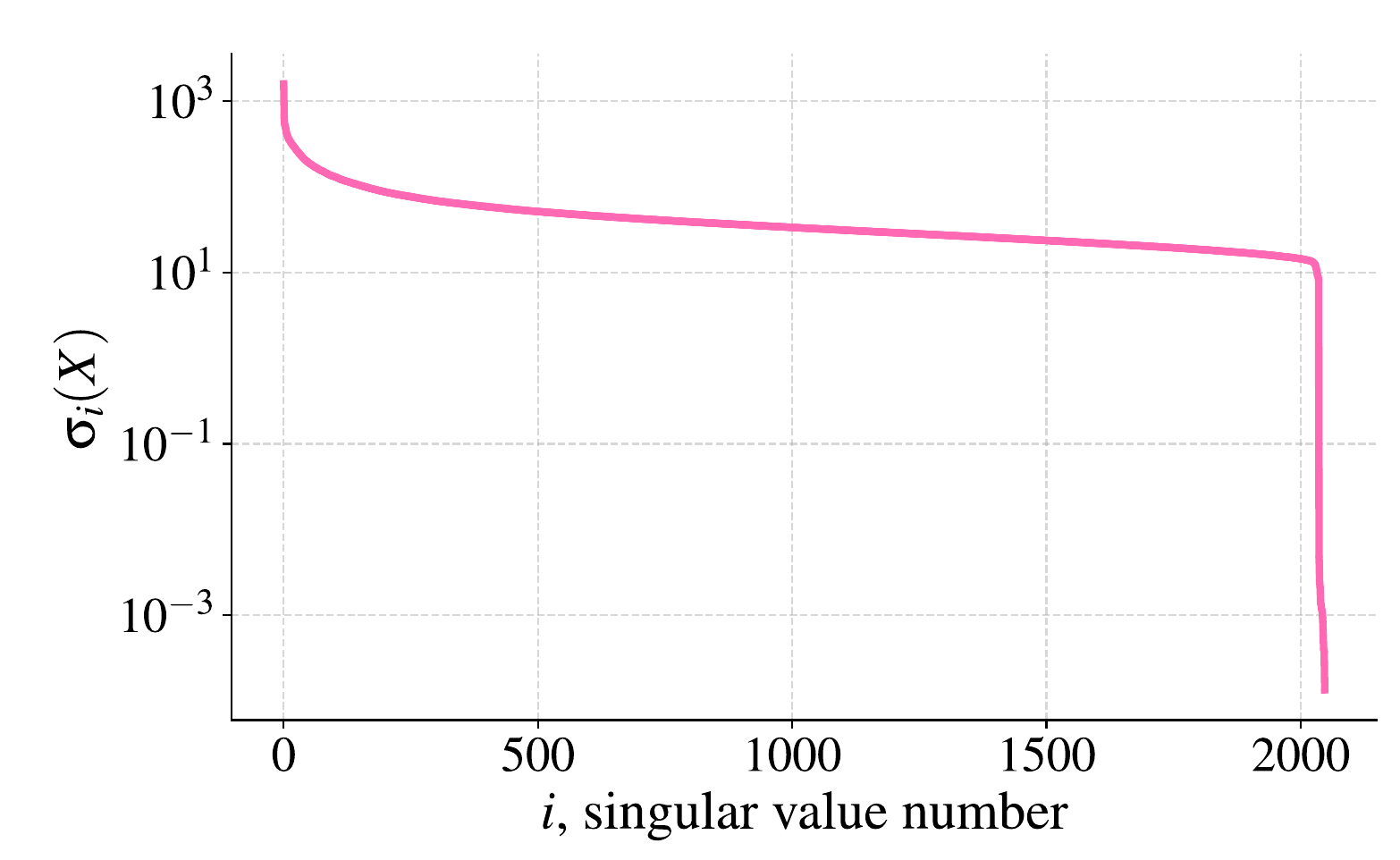}
        \caption{Distribution of singular values of matrix \( X \), obtained from the outputs of layer 1 \(\texttt{q\_proj}\) in the LLaMA3-1B~\cite{LLama3} model, computed over 64 samples from the WikiText~\cite{merity2016pointer} dataset.}
        \label{sings}
    \end{minipage}
\end{figure}

\subsection{Efficiency}
\label{effecien}
In this section, we also discuss the compression time for large models. 
In our approach we preprocess $X$ using the QR decomposition, see Proposition~\ref{prop:HR_hack}. The need for only the $R$ factor in the QR decomposition provides further acceleration.

We compared the time required by different methods in Table~\ref{tab:models}.
Additionally, we examined the breaking point at which computing the SVD of \( XX^\top \) becomes faster than performing a QR decomposition of \( X \). 
We have observed that even when the matrix has a highly unbalanced aspect ratio -- with one dimension exceeding the other by several tens of times -- the QR decomposition remains the preferred method, see Figure~\ref{fig:qr_vs_svd}, left graph.
All calculations were performed on a single NVIDIA A100 GPU. To preserve the integrity of the experiment, the SVD on the GPU was executed with PyTorch’s ``gesvd'' method, because the default ``gesvdj'' method, although faster, produces a noticeably larger error.

\begin{table}[t]
\centering
\caption{Computation times produced by different methods. }
\label{tab:models}
\begin{tabular}{l c c c}
\toprule
\textbf{Model} & \textbf{\#Samples} & \textbf{Strategy} & \textbf{Time, s} \\
\midrule
\multirow{3}{*}{LLaMA3-1B} & \multirow{3}{*}{64} & SVD-LLM & $\underline{273.93} \textcolor{gray}{\pm 22.12}$ \\
& & SVD-LLM V2 & $404.88 \textcolor{gray}{\pm 5.49}$ \\
\cellcolor{violet!10} & \cellcolor{violet!10}  & \cellcolor{violet!10} COALA & \cellcolor{violet!10} $\bm{196.34} \textcolor{gray}{\pm 6.48}$ \\
\midrule
\multirow{3}{*}{LLaMA3-8B} & \multirow{3}{*}{128} & SVD-LLM & $\underline{3624.88} \textcolor{gray}{\pm 512.4}$ \\
& & SVD-LLM V2 & $4210.5 \textcolor{gray}{\pm 63.3}$ \\
\cellcolor{violet!10} & \cellcolor{violet!10} & \cellcolor{violet!10} COALA & \cellcolor{violet!10}  $\bm{1811.0}\textcolor{gray}{ \pm 15.6}$ \\
\bottomrule
\end{tabular}
\end{table}

\begin{figure}[ht]
    \centering
    \begin{subfigure}{0.49\textwidth}
        \centering
        \includegraphics[width=\textwidth]{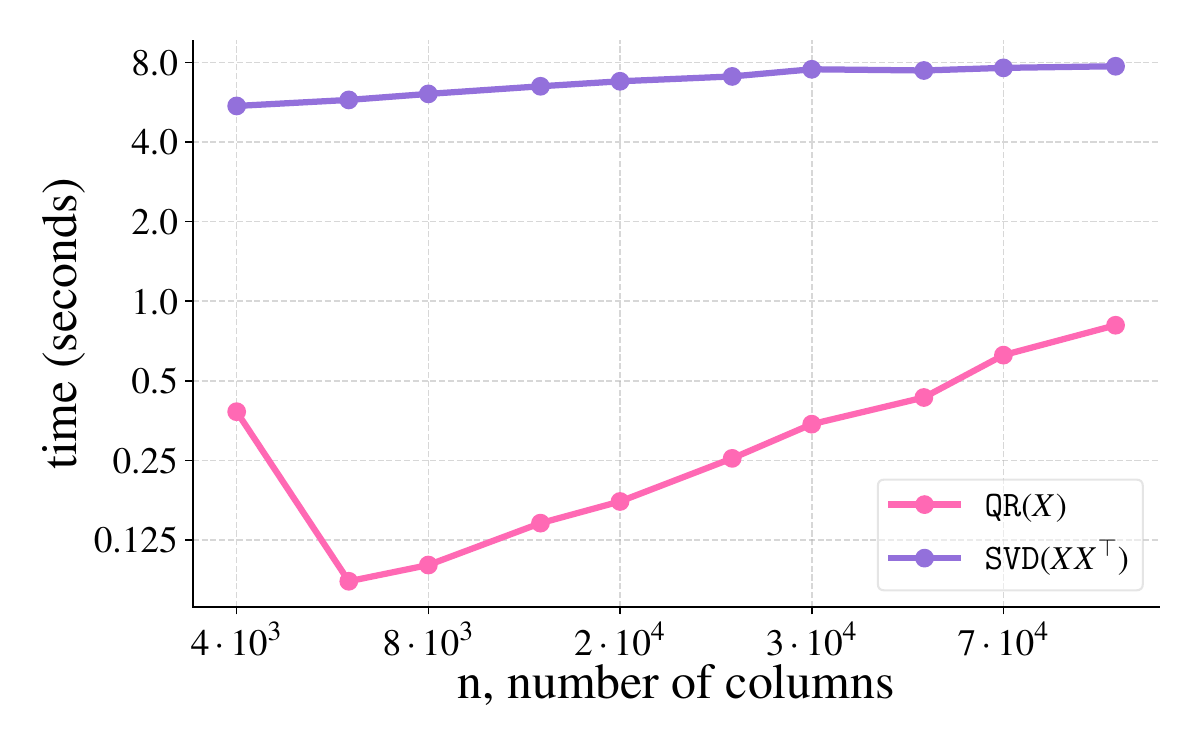}
    \end{subfigure}
    \hfill
    \begin{subfigure}{0.49\textwidth}
        \centering
        \includegraphics[width=\textwidth]{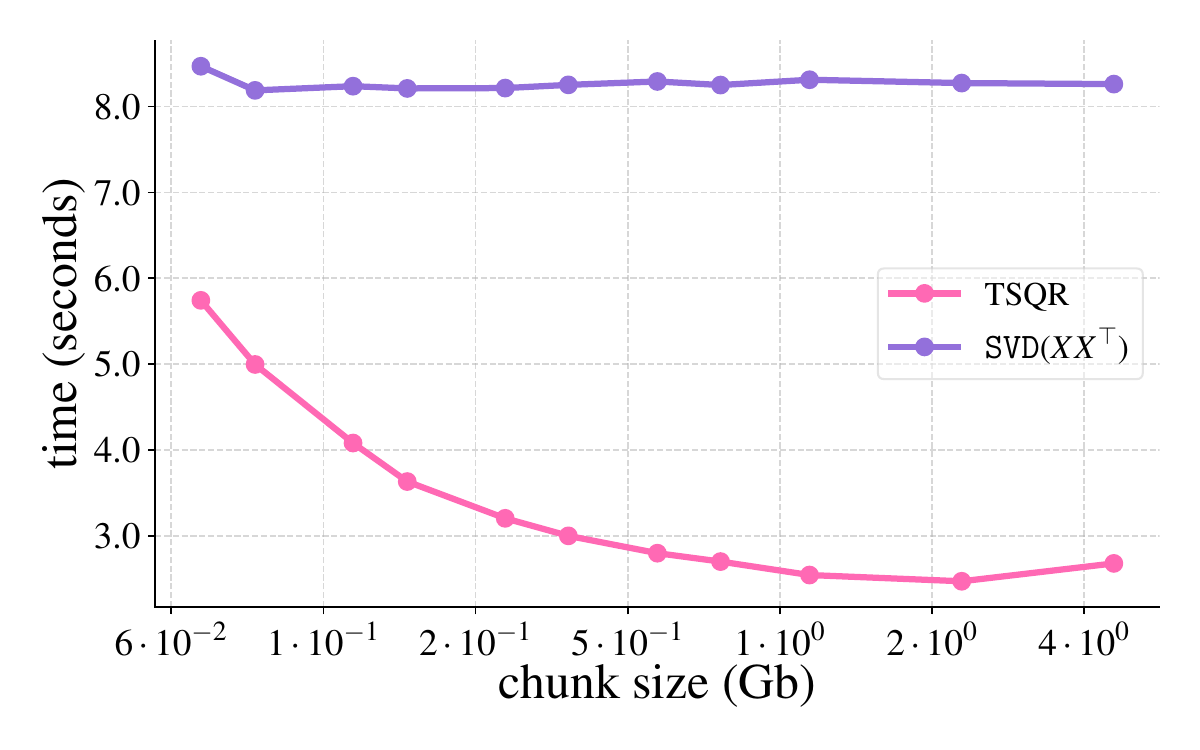}
    \end{subfigure}
    \caption{Runtimes  for computing $S$: \( SS^\top = XX^\top \) using two approaches. 
    \textit{Left:} Matrix \( X \in \mathbb{R}^{4096 \times n} \) for different $n$.
    \textit{Right:} Matrix \( X \in \mathbb{R}^{4096 \times 3 \cdot 10^5} \) split into chunks of different size. In this case, QR is computed using the TSQR method and the Gram matrix using $XX^\top = \sum_{i=1}^p X_i X_i^\top$.
    }
    \label{fig:qr_vs_svd}
\end{figure}

When dealing with matrices $X$ so large that they cannot be accommodated in fast memory, one can still easily compute the Gram matrix by splitting it into $p$ batches of smaller sizes that fit in memory, resulting into $XX^\top = \sum_{i=1}^p X_i X_i^\top$.  
In our case, we can also efficiently compute the QR decomposition using the Tall Skinny QR (TSQR) method~\cite{demmel2012communication}. 
It allows for reducing QR decomposition of the whole matrix to $p$ QR decompositions of the smaller sizes. 
For example, we can sequentially apply the QR decomposition to each new block, incorporating the $R$ matrix obtained from the previous step, i.e., 
for $p=3$:
\begin{gather*}
    X^\top = \begin{bmatrix}
        X_0^\top \\
        X_1^\top \\
        X_2^\top \\
    \end{bmatrix} = \begin{bmatrix}
        Q_0 R_0 \\
        X_1^\top \\
        X_2^\top \\
    \end{bmatrix} = \begin{bmatrix}
        Q_0 & \\
        & I \\
        & & I \\
    \end{bmatrix} \begin{bmatrix}
        R_0 \\
        X_1^\top \\
        X_2^\top \\
    \end{bmatrix} = \\ = \begin{bmatrix}
        Q_0 & \\
        & I \\
        & & I \\
    \end{bmatrix} \begin{bmatrix}
        Q_{01} & \\
        & I \\
    \end{bmatrix} \begin{bmatrix}
        R_{01} \\
        X_2^\top \\
    \end{bmatrix} = \begin{bmatrix}
        Q_0 & \\
        & I \\
        & & I \\
    \end{bmatrix} \begin{bmatrix}
        Q_{01} & \\
        & I \\
    \end{bmatrix} Q_{012} R_{012}.
\end{gather*}
Since a product of matrices with orthonormal columns also has orthonornal columns, we conclude that this is indeed a QR decomposition of $X^\top$.
As can be seen in Figure~\ref{fig:qr_vs_svd} (right), this approach not only eliminates the need to store large matrices, but also speeds up the solution time for large-scale matrices $X$.

Moreover, if multiple GPUs are available, the scheme can be transformed into a binary tree structure to enable parallel execution, thereby achieving a speedup in computational time:

\begin{center}
\setlength{\unitlength}{.5cm}
\begin{picture}(7,4)

\put(0.5,0.5){\textcolor{violet}{$X_3$}}
\put(0.5,1.5){\textcolor{violet}{$X_2$}}
\put(0.5,2.5){\textcolor{violet}{$X_1$}}
\put(0.5,3.5){\textcolor{violet}{$X_0$}}

\put(1.5,0.5){$\rightarrow$}
\put(1.5,1.5){$\rightarrow$}
\put(1.5,2.5){$\rightarrow$}
\put(1.5,3.5){$\rightarrow$}

\put(2.5,0.5){$R_3$}
\put(2.5,1.5){$R_2$}
\put(2.5,2.5){$R_1$}
\put(2.5,3.5){$R_0$}

\put(3.5,0.65){$\nearrow$}
\put(3.5,1.35){$\searrow$}
\put(3.5,2.65){$\nearrow$}
\put(3.5,3.35){$\searrow$}

\put(4.5,1.0){$R_{23}$}
\put(4.5,3.0){$R_{01}$}

\put(5.6,1.5){$\nearrow$}
\put(5.6,2.5){$\searrow$}

\put(6.5,2.0){$R_{0123}$}

\end{picture}
\end{center}
If one or more arrows point to the same matrix, then this matrix represents the R-factor obtained from the QR decomposition of the matrix formed by stacking all the matrices at the opposite ends of the arrows. 
This process is described in more detail in~\cite{demmel2012communication}.

\section{Weighted Low-Rank Approximation with Regularization}
\label{regularization_secta}
So far, we have discussed various numerical aspects of solving the problem \eqref{eq:main_without_reg_task}.
However, in practice, we want to adapt the model to fit the available examples, but not excessively, as we aim to avoid overfitting and preserve the model's knowledge in other domains.
This situation becomes particularly pronounced when data is scarce and matrix $X$ may have more columns than rows. 
For example, in model compression, data are often limited due to confidentiality, yet there's a need to deploy models on devices with restricted resources~\cite{bai2020few, lopes2017data}. 
A similarly relevant challenge is adapting pre-trained generative models to new concepts using just a handful of images (usually 3–5)~\cite{han2023svdiff}. 
Thus, we can formulate the following minimization problem:
\begin{equation}
\label{eq:reg_prob}
    \min_{\operatorname{rank}(W') \leq k} \|WX - W'X\|_F^2 + \mu \| W - W'\|_F^2,
\end{equation}
where \( \mu \geq 0 \) is a given parameter.
Notably, this strategy yields systematic improvements even in data-sufficient scenarios.

Our methodology presented in earlier sections continues to provide an efficient and robust solution to the problem~\eqref{eq:reg_prob} as well.
\begin{prop}
\label{prop:regular_task}
    Let \( W \in \mathbb{R}^{m \times n} \) and \( X \in \mathbb{R}^{n \times k} \) be arbitrary matrices. Then problem~\eqref{eq:reg_prob} is equivalent~to
\begin{equation*}
    \min_{\operatorname{rank}(W') \leq k} \|(W - W') \widetilde X \|_F^2,
\end{equation*}
where $\widetilde X = \begin{bmatrix} X & \sqrt{\mu} I \end{bmatrix}$.
\end{prop}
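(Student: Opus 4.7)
The plan is to simply expand $(W - W')\widetilde X$ using the block structure of $\widetilde X$ and invoke the elementary identity that the squared Frobenius norm of a horizontally concatenated matrix equals the sum of the squared Frobenius norms of its blocks. Since the rank constraint $\operatorname{rank}(W') \leq k$ is identical in both formulations, showing that the two objective functions coincide will give the desired equivalence.

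Concretely, I would first note the dimensions: for $\widetilde X = [X \mid \sqrt{\mu} I]$ to be well-defined we need the identity block to be $n \times n$, so $\widetilde X \in \mathbb{R}^{n \times (k+n)}$ and $(W - W')\widetilde X \in \mathbb{R}^{m \times (k+n)}$ is a valid product. Then multiplying through block-wise yields
\[
(W - W')\widetilde X = \begin{bmatrix} (W - W')X & \sqrt{\mu}\,(W - W') \end{bmatrix}.
\]

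Applying the block identity $\|[A \; B]\|_F^2 = \|A\|_F^2 + \|B\|_F^2$ (which follows immediately from the fact that $\|\cdot\|_F^2$ is the sum of squared entries), one obtains
\[
\|(W - W')\widetilde X\|_F^2 = \|(W - W')X\|_F^2 + \mu\,\|W - W'\|_F^2,
\]
which is exactly the objective of \eqref{eq:reg_prob}. Since both problems minimize the same function over the same feasible set $\{W' : \operatorname{rank}(W') \leq k\}$, they are equivalent, and any solution procedure for the reformulated (unregularized-looking) problem, in particular Algorithm~\ref{alg:find_easy} applied with $X$ replaced by $\widetilde X$, solves \eqref{eq:reg_prob}.

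There is no real obstacle here: the proof is a one-line block-matrix computation. The only thing worth flagging is that this reformulation is what makes the regularized problem tractable by the inversion-free machinery of Propositions~\ref{prop:main_without_reg} and \ref{prop:HR_hack}, since $\widetilde X$ automatically has rank $n$ whenever $\mu > 0$, so the QR-based approach and subsequent SVD remain numerically well-posed without any additional assumptions on $X$.
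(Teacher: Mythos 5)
Your proof is correct and matches the paper's argument exactly: both rely on the block identity $\|[A\;\;B]\|_F^2 = \|A\|_F^2 + \|B\|_F^2$ applied to $(W-W')\widetilde{X} = [(W-W')X \mid \sqrt{\mu}(W-W')]$, with the paper simply reading the chain of equalities in the opposite direction. The extra remark about $\widetilde{X}$ having full row rank for $\mu>0$ is a useful observation but not part of the equivalence claim itself.
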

\begin{proof} 
 We have
     \begin{gather*}
        \|(W'-W)X\|_F^2 + \mu \| W' - W \|_F^2 = \| \begin{bmatrix}
            (W' - W)X & \sqrt\mu (W' - W)
        \end{bmatrix}\|^2_F = \\ = \|(W' - W) \cdot \begin{bmatrix}
            X & \sqrt\mu \cdot I
        \end{bmatrix} \|_F^2.
    \end{gather*}
\end{proof}
This equivalence means that we can use the same approach as in the unregularized problem by augmenting the data matrix X with the scaled identity matrix $\sqrt \mu I$. 
By transforming the regularized problem into this form, we can apply efficient algorithms such as Proposition~\ref{prop:HR_hack} for its solution.
The corresponding pseudocode is presented in Algorithm~\ref{alg:find_reg}.

\begin{algorithm}
\caption{\textcolor{violet}{A solution to the weighted low-rank approximation problem with regularization}}
\label{alg:find_reg}
\begin{algorithmic}[1]
\Require  $W \in \mathbb{R}^{m \times n}$, $X \in \mathbb{R}^{n \times k}$, $\mu \in \mathbb{R}_+$, $r \in \mathbb{N}$
\Ensure $A \in \mathbb{R}^{m \times r}$, $B \in \mathbb{R}^{r \times n}$
\State \textbf{Form} the matrix $X' = [X \quad \sqrt{\mu} \, I]$, where $I$ is the $n \times n$ identity matrix.
\State \textbf{Call} Algorithm \ref{alg:find_easy} with input $(W, X', r)$ to compute $A$ and $B$.
\State \Return $A$, $B$
\end{algorithmic}
\end{algorithm}

\paragraph{What is the limit of \( W_{\mu} \) as \(\mu \to 0\)?}

Another natural question is what happens with the regularized solution $W_\mu$ for small $\mu$. 
If $X$ is of full row rank, then it is natural to assume that it  $W_\mu$ converges to a unique solution of the unregularized problem.
It is, however, unclear what happens in the general case and what is the convergence rate. 
We establish that \( W_{\mu} \) converges to a well-defined solution \( W_{0} \), which corresponds to the solution obtained from Proposition~\ref{prop:main_without_reg}.
The following theorem provides a precise estimate for the convergence rate.
\begin{theorem}
\label{theorem:hard}
    Let \( W \in \mathbb{R}^{m \times n} \) and \( X \in \mathbb{R}^{n \times k} \). 
    Suppose that the singular values of \( WX \) satisfy \( \sigma_r(WX) \neq \sigma_{r+1}(WX) \), where \( \sigma_i(\cdot) \) denotes the \( i \)-th largest singular value.
Let \( W_0 = U_rU_r^\top W \) denote the solution to the problem~\eqref{eq:main_without_reg_task},
    and let \( W_{\mu} \) denote the solution to the regularized problem~\eqref{eq:reg_prob}.
    then the following estimate holds:
    \begin{equation*}
    \|W_{0} - W_{\mu} \|_F \leq  \dfrac{ 2\|W\|^2_2 \|W\|_F}{ \sigma_r^2\left( WX \right) - \sigma_{r+1}^2\left( WX \right) } \cdot \mu.
\end{equation*}
\end{theorem}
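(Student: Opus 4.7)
My plan is to represent both $W_0$ and $W_\mu$ as orthogonal projections of $W$ onto top-$r$ left singular subspaces, and then bound the perturbation of those projectors via a Davis--Kahan-type $\sin\Theta$ estimate applied to the corresponding Gram matrices.

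First, by Proposition~\ref{prop:regular_task} the regularized problem is equivalent to an unregularized problem with augmented data $\widetilde X = [X,\ \sqrt\mu\,I]$, and Proposition~\ref{prop:main_without_reg} then gives $W_\mu = P_\mu W$, where $P_\mu = U_r^\mu(U_r^\mu)^\top$ is the orthogonal projector onto the span of the top-$r$ left singular vectors of $W\widetilde X$. Likewise $W_0 = P_0 W$. A direct computation of the Gram matrix yields
\[
(W\widetilde X)(W\widetilde X)^\top \;=\; WXX^\top W^\top + \mu WW^\top \;=\; A_0 + \mu WW^\top,
\]
where $A_0 := (WX)(WX)^\top$, so $P_0$ and $P_\mu$ are the top-$r$ spectral projectors of the symmetric positive semidefinite matrices $A_0$ and $A_\mu := A_0 + \mu WW^\top$, respectively. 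Hence
\[
\|W_0 - W_\mu\|_F \;=\; \|(P_0 - P_\mu)\,W\|_F \;\leq\; \|P_0 - P_\mu\|_F\cdot\|W\|_2,
\]
which reduces the task to a classical spectral-projector perturbation estimate with perturbation $E := \mu WW^\top$.

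Second, by hypothesis the relevant eigenvalue gap is $\delta := \sigma_r^2(WX) - \sigma_{r+1}^2(WX) > 0$. A Davis--Kahan $\sin\Theta$ bound then produces $\|P_0 - P_\mu\|_F \leq c\,\|E\|_F/\delta = c\,\mu\,\|WW^\top\|_F/\delta$, and the crude estimate $\|WW^\top\|_F \leq \|W\|_2\|W\|_F$ already yields the desired scaling $\|W_0 - W_\mu\|_F \lesssim \mu\,\|W\|_2^2\,\|W\|_F/\delta$. To recover the explicit multiplicative factor $\sigma_1(X)/\sigma_k(X) + \max(1, \mu/(4\sigma_k^2(X)))$ appearing in the theorem statement, I would refine the analysis on the symmetric square-root side: set $S_\mu := (XX^\top + \mu I)^{1/2}$ and $S_0 := (XX^\top)^{1/2}$. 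Because $S_\mu$ and $S_0$ commute (both being functions of $XX^\top$), the identity $(S_\mu - S_0)(S_\mu + S_0) = \mu I$ lets one split the perturbation along the orthogonal decomposition $\mathbb{R}^n = \mathrm{range}(X) \oplus \mathrm{range}(X)^\perp$. On $\mathrm{range}(X)$ the correction scales like $\mu/\sigma_k(X)$, which after accounting for the two regimes $\mu \lesssim \sigma_k^2(X)$ and $\mu \gtrsim \sigma_k^2(X)$ produces the factor $\max(1, \mu/(4\sigma_k^2(X)))$. On its orthogonal complement the correction equals $\sqrt{\mu}\,I$, and the condition-number factor $\sigma_1(X)/\sigma_k(X)$ enters when translating bounds from the $XX^\top$-weighted geometry back to the standard Euclidean one.

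The main obstacle is matching the exact constant in the theorem rather than establishing the $\mathcal{O}(\mu)$ rate: a direct Davis--Kahan application already gives a clean $\sin\Theta$ estimate without the $\sigma_1(X)/\sigma_k(X)$ factor, so obtaining the stated form requires the more delicate accounting above, which distinguishes the subspace aligned with $\mathrm{range}(X)$ from its complement and separately handles the two $\mu$-regimes dividing at $\mu \asymp \sigma_k^2(X)$. This bookkeeping, rather than any single deep inequality, is the technical heart of the argument.
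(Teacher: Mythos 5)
Your approach is correct, and in fact it is \emph{simpler and sharper} than the paper's own argument, but you misread the situation at the end. The key identity you wrote down --- $(W\widetilde X)(W\widetilde X)^\top = (WX)(WX)^\top + \mu\, WW^\top$, so that $A_\mu - A_0 = \mu\, WW^\top$ \emph{exactly} --- already does all the work. Applying the Davis--Kahan--Wedin bound (Theorem~\ref{devis}) to the symmetric pair $A_0, A_\mu$ gives $\|P_0 - P_\mu\|_2 \leq \tfrac{2\mu\|WW^\top\|_2}{\lambda_r(A_0) - \lambda_{r+1}(A_0)} = \tfrac{2\mu\|W\|_2^2}{\sigma_r^2(WX) - \sigma_{r+1}^2(WX)}$, and hence $\|W_0 - W_\mu\|_F = \|(P_0 - P_\mu)W\|_F \leq \|P_0 - P_\mu\|_2\,\|W\|_F \leq \tfrac{2\mu\|W\|_2^2\|W\|_F}{\sigma_r^2(WX) - \sigma_{r+1}^2(WX)}$. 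Since $\sigma_1(X)/\sigma_k(X) \geq 1$ and $\max(1,\ \mu/(4\sigma_k^2(X))) \geq 1$, the bracketed factor in the theorem statement is always $\geq 2$, so your bound strictly implies the stated one. There is nothing to ``recover'': the theorem only asserts an upper bound, and yours is smaller.

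The ``more delicate accounting'' you sketch in the third paragraph --- splitting along $\mathrm{range}(X)\oplus\mathrm{range}(X)^\perp$ via the square roots $S_0, S_\mu$ and distinguishing the two $\mu$-regimes --- is essentially what the paper does (Lemmas~\ref{lemma:xxt_bound} and~\ref{lemma:bound_full_r}). The paper writes $A_0 = \widetilde W\Lambda^{1/2}(\widetilde W\Lambda^{1/2})^\top$, $A_\mu = \widetilde W(\Lambda+\mu I)^{1/2}(\widetilde W(\Lambda+\mu I)^{1/2})^\top$, expands $A_\mu - A_0 = AE^\top + EA^\top + EE^\top$ with $E$ the square-root perturbation, and bounds via the triangle inequality. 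This is where the lossy factor $\sigma_1(X)/\sigma_k(X) + \max(1, \mu/(4\sigma_k^2(X)))$ comes from: it is an artifact of bounding the three terms separately rather than noticing they telescope to $\mu WW^\top$. If you followed that route you would make your estimate strictly worse. The only small gap to close in your argument is to verify that $W_\mu = P_\mu W$ is indeed \emph{the} solution for small $\mu > 0$: since $\widetilde X = [X\ \sqrt\mu I]$ has full row rank and $\lambda_r(A_\mu) > \lambda_{r+1}(A_\mu)$ by continuity for $\mu$ small, Eckart--Young and the argument in the proof of Theorem~\ref{theorem:easy} give uniqueness, as required for the claim to be about the same object the theorem names.
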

\begin{proof}
    See Appendix~\ref{appx:hard_part}.
\end{proof} 

In the case where \( X \) has full rank, we have a more precise estimate with a better constant, which, however, also involves the multiplier $1/{\text{gap}}$, where 
\[\texttt{gap} = \sigma_r(WX) - \sigma_{r+1}(WX),\]
see Appendix~\ref{appx:regular}.
This gap-dependent behavior is intrinsic to the problem, as demonstrated in Example~\ref{appx:example_gap}.

Our estimate suggests that even in the degenerate case \( W_{\mu} \) approaches \( W_{0} \) linearly with respect to \(\mu\) as \(\mu \to 0\), which we also observe in numerical simulations.
The estimate may also be useful for practical reasons as it shows asymptotic dependence on key parameters such as the gap value and the regularization parameter~$\mu$.
For example, the estimate quantifies how sensitive our solution is to the choice of \(\mu\), which can inform practical decisions about selecting an appropriate regularization parameter. This can be of particular interest in applications where the balance between fitting the data and preventing overfitting is delicate.

\section{Experiments}
\label{exps}

\subsection{Model compression}
In this section, we evaluate the effectiveness of our regularization-based compression approach in practice~\footnote{Our code is available at \url{https://github.com/urparkina/COALA}.}.
We first fix the procedure for selecting the regularization parameter~$\mu$.
Specifically, we determine~$\mu$ relative to the unregularized solution $W_0$ (i.e., the one obtained for $\mu = 0$) according to the formula below:
\begin{equation} \label{eq:reglam}
    \mu = \frac{\|W_0X - WX\|_F^2}{\| W_0 - W\|_F^2} \cdot \lambda,
\end{equation}
where $\lambda$ serves as a hyperparameter controlling the adjustment.
This step is crucial because different layers of large language models exhibit substantially different norms of the weight matrices~$W$, calibration matrices~$X$, and their products~$WX$, see, e.g.,~\cite{Residual}. Moreover, we compare these two strategies, analyzing how the metric depends on adaptive and non-adaptive choices of $\mu$ across layers, see Figure~\ref{fig:adaptive}. The Mistral‑7B‑Instruct model was selected due to its pronounced variation of layer‑wise norms.

\begin{figure}[H]
    \centering
    \includegraphics[width=0.5\textwidth]{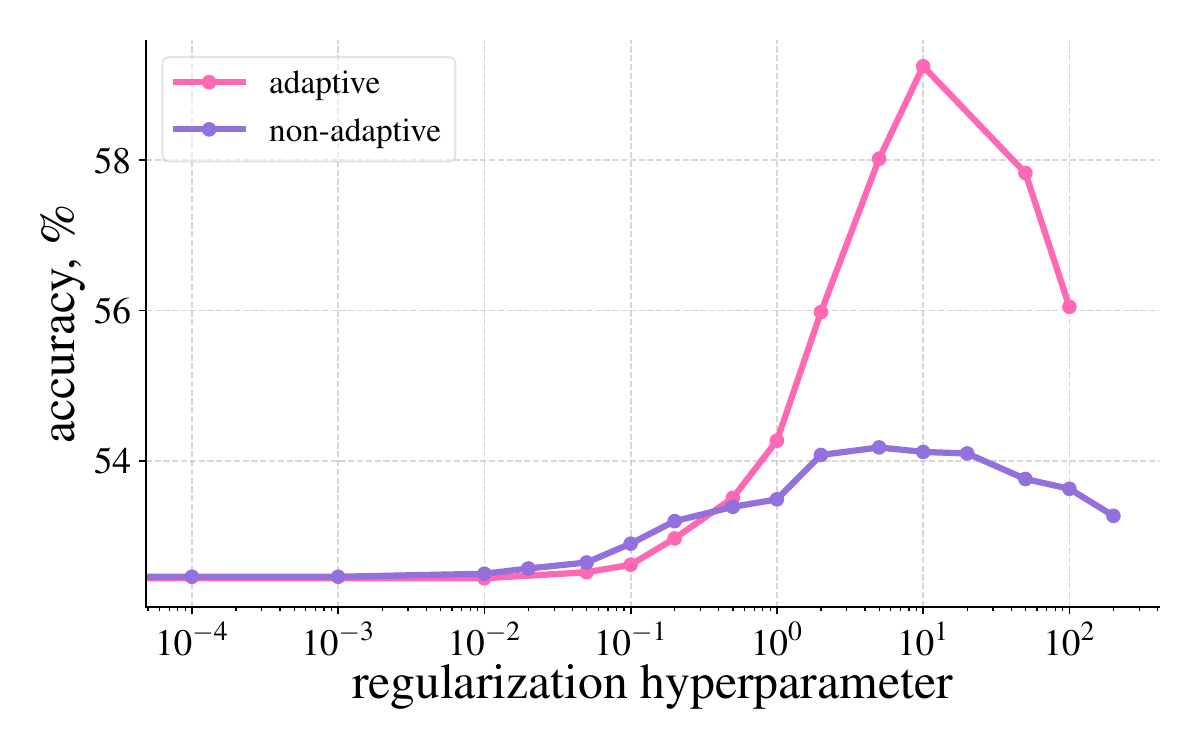}
    \caption{Comparison of the impact of parameter tuning with (Equation \eqref{eq:reglam}) and without considering layer‑wise norms on model quality at 70\% compression, evaluated on a common‑sense reasoning dataset using the \emph{Mistral-7B‑Instruct} model.}
    \label{fig:adaptive}
\end{figure}

Figure~\ref{fig:70_and_80} presents sensitivity analysis of the parameter $\lambda$, demonstrating that the optimal value of~$\mu$ remains relatively stable (in the region from 1 to 10) across different settings, including various model architectures, datasets, and compression ratios.

\begin{figure}[ht]
    \centering
    \begin{subfigure}{0.49\textwidth}
        \centering
        \includegraphics[width=\textwidth]{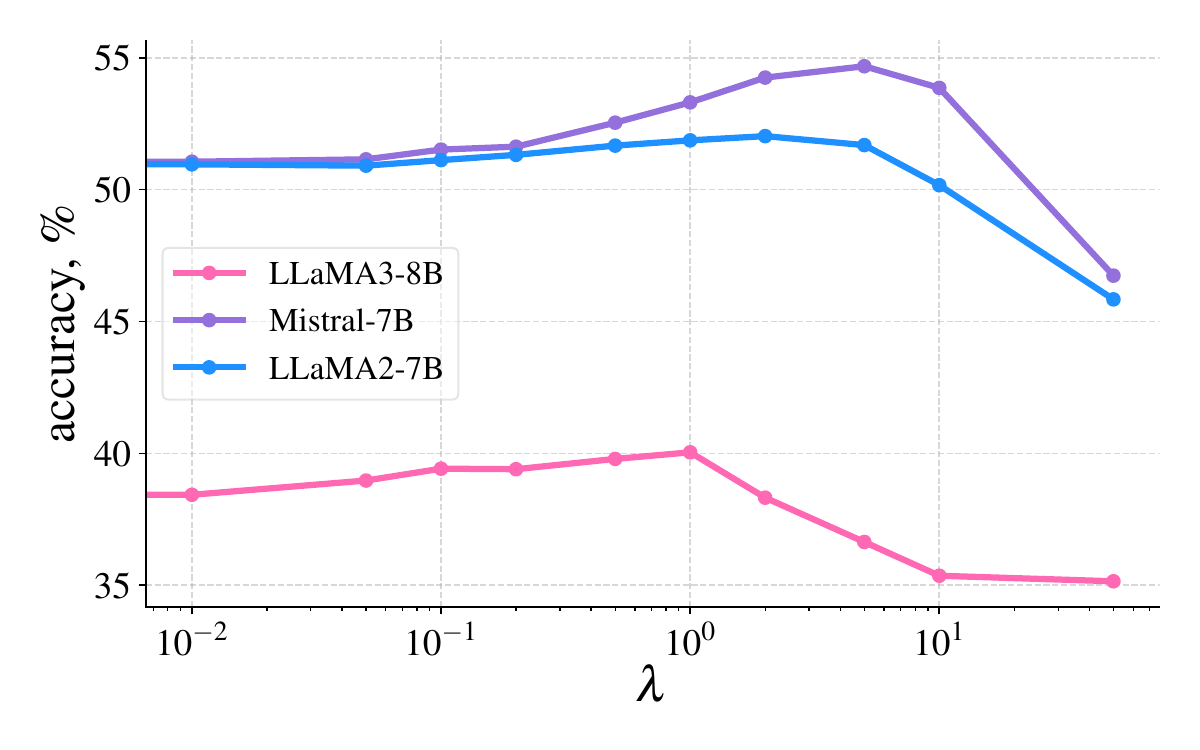}
    \end{subfigure}
    \hfill
    \begin{subfigure}{0.49\textwidth}
        \centering
        \includegraphics[width=\textwidth]{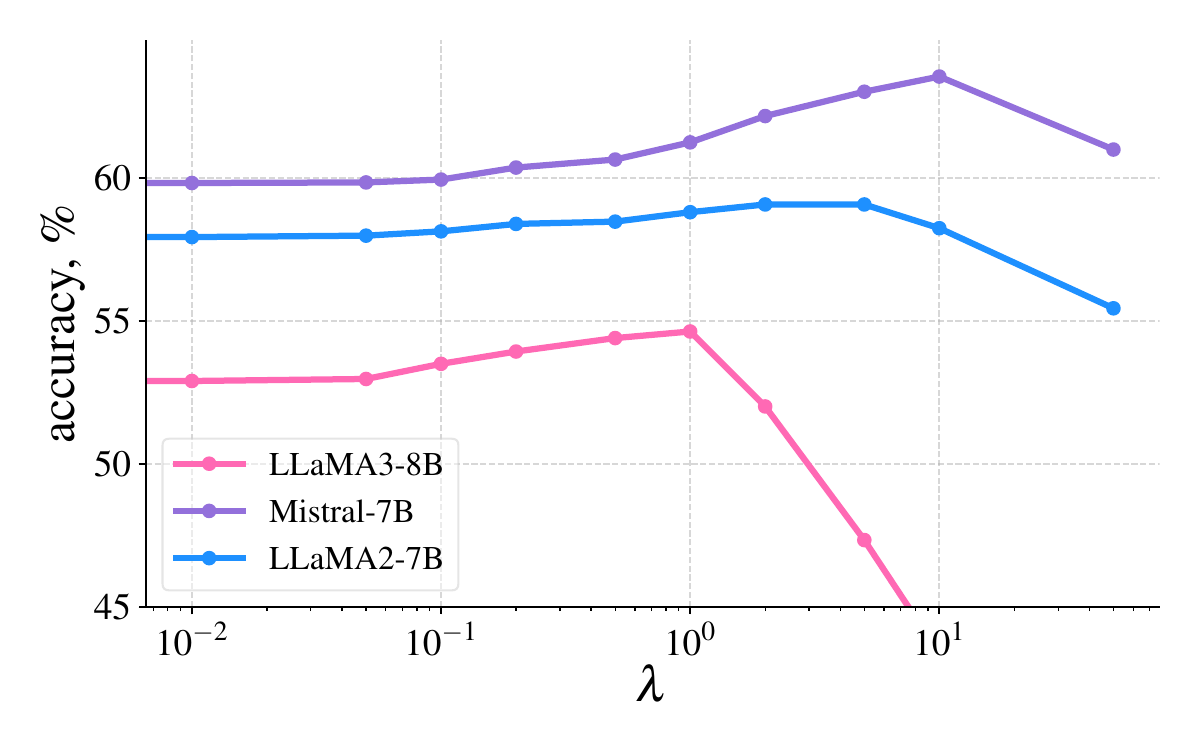}
    \end{subfigure}
    \caption{The dependence of average accuracy on the parameter $\lambda$ on the commonsense reasoning dataset for different models. On the left: 70\% compression ratio, on the right: 80\% compression ratio.
    }
    \label{fig:70_and_80}
\end{figure}

Table~\ref{tab:llama3-1b} compares several methods without adaptive rank selection under reduced-precision (fp16) conditions. 
We observe that our more numerically stable formulations improve performance, with regularization providing the most consistent gains.
We compare our method to approaches that do not use fine-tuning or adaptive rank selection.
However, our solution can be potentially used not only as a standalone compression technique, but also integrated into other works as a part of a problem-solving framework.

Finally, Table~\ref{tab:mistral} compares our approach with state-of-the-art methods that report the relevant metrics in their manuscripts.
Our method achieves comparable or even superior results solely due to the use of regularization, without any additional heuristics or fine-tuning.

\begin{table}[H]
    \centering
    \caption{Metric values of various compression methods. All computations, except for solving the weighted low-rank approximation problem, were performed in half precision (fp16). Experiments were conducted using the \emph{LLaMA-3.2-1B-Instruct} model compressed at 90\% using text samples from the commonsense reasoning dataset, which was also used for validation.}
    \label{tab:llama3-1b}
     \resizebox{\linewidth}{!}{
    \begin{tabular}{l c c c c c c c c}
    \toprule
    $\bm{\textbf{Method}}$ & $\bm{\textbf{boolQ}}$ & $\bm{\textbf{PIQA}}$ & $\bm{\textbf{WiNoG}}$ & $\bm{\textbf{HSwag}}$ & $\bm{\textbf{ARC-E}}$ & $\bm{\textbf{ARC-C}}$ & $\bm{\textbf{OBQA}}$ \\
    \midrule
    \textcolor{gray}{Original} & $\textcolor{gray}{69.5_{\pm0.7}}$ & $\textcolor{gray}{74.4_{\pm1.0}}$ & $\textcolor{gray}{59.5_{\pm1.3}}$ & $\textcolor{gray}{60.7_{\pm0.5}}$ & $\textcolor{gray}{63.2_{\pm0.9}}$ & $\textcolor{gray}{38.1_{\pm2.1}}$ & $\textcolor{gray}{34.6_{\pm1.4}}$ \\
    \midrule
    ASVD & $\bm{58.0_{\pm0.7}}$ & $52.5_{\pm1.0}$ & $51.3_{\pm1.3}$ & $27.8_{\pm0.5}$ & $30.0_{\pm0.9}$ & $25.9_{\pm2.1}$ & $\underline{26.8_{\pm1.4}}$ \\
    SVD-LLM & $54.1_{\pm0.7}$ & $60.6_{\pm1.0}$ & $\underline{53.8_{\pm1.3}}$ & $34.6_{\pm0.5}$ & $\underline{44.3_{\pm0.9}}$ & $25.5_{\pm2.1}$ & $26.0_{\pm1.4}$ \\
    $\text{COALA}_{\mu=0}$ & $\underline{57.6_{\pm0.7}}$ & $\underline{60.9_{\pm1.0}}$ & $\underline{53.2_{\pm1.3}}$ & $34.6_{\pm0.5}$ & $43.4_{\pm0.9}$ & $\underline{27.3_{\pm2.1}}$ & $26.0_{\pm1.4}$ \\
    \cellcolor{violet!10}$\text{COALA}_{\mu}$ 
    & \cellcolor{violet!10}$59.0_{\pm0.7}$ 
    & \cellcolor{violet!10}$\bm{62.8_{\pm1.0}}$ 
    & \cellcolor{violet!10}$\bm{54.0_{\pm1.3}}$ 
    & \cellcolor{violet!10}$\bm{36.6_{\pm0.5}}$ 
    & \cellcolor{violet!10}$\bm{46.2_{\pm0.9}}$ 
    & \cellcolor{violet!10}$\bm{29.2_{\pm2.1}}$ 
    & \cellcolor{violet!10}$\bm{27.6_{\pm1.4}}$ \\
    \bottomrule
    \end{tabular}
    }
\end{table}

\begin{table}[H]
    \centering
    \caption{Metric values of various compression methods. Experiments were conducted using the \emph{Mistral-7B} model on the WikiText2 dataset and commonsense reasoning used for validation. The results for SliceGPT~\cite{SliceGPT} and FLAP~\cite{FLAP} were taken from the work~\cite{SoLA}.}
    \label{tab:mistral}
     \resizebox{\linewidth}{!}{
    \begin{tabular}{c l c c c c c c c c}
    \toprule
    \textbf{Ratio} & \textbf{Method} & \textbf{MMLU} & \textbf{BoolQ} & \textbf{PIQA} & \textbf{WiNoG} & \textbf{HSweg} & \textbf{ARC-E} & \textbf{ARC-C} & \textbf{OBQA} \\
    \midrule
    \textcolor{gray}{100\%} & \textcolor{gray}{Mistral-7B} & $\textcolor{gray}{{62.50}}$ & $\textcolor{gray}{{83.98}}$ & $\textcolor{gray}{{82.05}}$ & $\textcolor{gray}{{73.95}}$ & $\textcolor{gray}{{81.02}}$ & $\textcolor{gray}{{79.55}}$ & $\textcolor{gray}{{53.92}}$ & $\textcolor{gray}{{44.00}}$ \\
    \midrule
     & FLAP & $25.90$ & ${62.26}$ & $72.31$ & $64.09$ & $55.94$ & $51.05$ & $31.91$ & $36.80$ \\
     & SliceGPT & $28.60$ & $37.86$ & $60.66$ & $59.43$ & $45.10$ & $48.15$ & $30.03$ & $32.00$ \\
    80\% & SVD-LLM & $\underline{41.80}$ & $\underline{68.29}$ & $73.39$ & $68.43$ & $61.75$ & $\underline{71.34}$ & $\underline{40.53}$ & $36.60$ \\
     & SoLA & $\bm{44.20}$ & $66.09$ & $\underline{73.67}$ & $\underline{68.75}$ & $\underline{63.32}$ & $69.99$ & $39.76$ & $\underline{39.20}$ \\
     \cellcolor{violet!10}& \cellcolor{violet!10}COALA & \cellcolor{violet!10}$41.20$ & \cellcolor{violet!10}$\bm{78.07}$ & \cellcolor{violet!10}$\bm{77.04}$ & \cellcolor{violet!10}$\bm{68.82}$ & \cellcolor{violet!10}$\bm{65.06}$ & \cellcolor{violet!10}$\bm{72.13}$ & \cellcolor{violet!10}$\bm{43.43}$ & \cellcolor{violet!10}$\bm{40.20}$ \\
    \midrule
     & FLAP & $26.40$ & $\bm{65.26}$ & $\underline{69.59}$ & $\bm{64.80}$ & $\bm{55.61}$ & ${48.91}$ & $30.55$ & ${35.80}$ \\
     & SliceGPT & $25.00$ & $37.83$ & $54.41$ & $51.62$ & $32.54$ & $35.02$ & $22.95$ & $26.80$ \\
    70\% & SVD-LLM & $\underline{28.20}$ & $\underline{64.62}$ & $64.91$ & ${64.17}$ & $47.36$ & $58.25$ & $30.72$ & $34.20$ \\
     & SoLA & $\bm{33.80}$ & $62.57$ & $68.39$ & $\underline{64.48}$ & $\underline{53.00}$ & $\underline{60.90}$ & $\underline{32.76}$ & $\bm{37.60}$ \\
     \cellcolor{violet!10}& \cellcolor{violet!10}COALA & \cellcolor{violet!10}$27.35$ & \cellcolor{violet!10}$63.82$ & \cellcolor{violet!10}$\bm{70.40}$ & \cellcolor{violet!10}$62.43$ & \cellcolor{violet!10}${51.02}$ & \cellcolor{violet!10}$\bm{63.63}$ & \cellcolor{violet!10}$\bm{35.49}$ & \cellcolor{violet!10}$\underline{36.00}$ \\
    \bottomrule
    \end{tabular} 
     } 
\end{table}

We conducted experiments on the models LLaMA3-8B, LLaMA3-1B~\cite{grattafiori2024llama} and Mistral-7B~\cite{chaplot2023albert} (including Insrtuct versions), comparing our approach with existing methods across various datasets: boolQ~\cite{clark2019boolq}, OpenbookQA~\cite{mihaylov2018can}, WinoGrande~\cite{sakaguchi2021winogrande}, HellaSwag~\cite{zellers2019hellaswag}, Arc\_e~\cite{clark2018think}, Arc\_c~\cite{allenai:arc}, PIQA~\cite{bisk2020piqa}, MMLU~\cite{hendrycks2021measuring}. We used A100 GPU and Tesla T4 GPU for our experiments.
The results indicate that in all the considered settings our regularized algorithm systematically achieves better metrics during compression.

\subsection{Fine-Tuning}

\begin{table}[ht]
\caption{Results of fine-tuning LLaMA3-1B-Instruct at rank $r=8$ using different PEFT initialization methods on the commonsense reasoning dataset with 24 examples for initialization. In exact arithmetic, ``COALA \( \alpha = 2 \)'' is equivalent to CorDA. See  hyperparameters in Appendix~\ref{appx:hypps}.
}
\label{table:llama_peft_comparison}
\footnotesize
\centering
 \resizebox{\linewidth}{!}{
\begin{tabular}{lrrrrrrrr|r}
\toprule
\textbf{Method} & \textbf{BoolQ} & \textbf{PIQA} & \textbf{SIQA} & \textbf{HSweg} & \textbf{WiNoG} & \textbf{ARC-e} & \textbf{ARC-c} & \textbf{OBQA} & \textbf{Avg.} \\
\midrule
LoRA & $\bm{64.5}$ & $\bm{76.1}$ & $71.5$ & $82.4$ & $53.8$ & $76.8$ & $58.5$ & $68.2$ & $75.0$ \\
PiSSA & $\bm{64.5}$ & $\underline{76.0}$ & $71.5$ & $\bm{83.0}$ & $52.0$ & $\bm{78.4}$ & $\bm{60.8}$ & $\bm{70.4}$ & $\underline{75.4}$ \\
CorDA & $61.4$ & $68.7$ & $62.1$ & $60.8$ & $52.4$ & $68.7$ & $40.1$ & $52.8$ & $60.9$ \\
\cellcolor{violet!10}COALA $\alpha = 2$ & \cellcolor{violet!10}$\underline{64.4}$ & \cellcolor{violet!10}$75.9$ & \cellcolor{violet!10}$\underline{72.6}$ & \cellcolor{violet!10}$82.7$ & \cellcolor{violet!10}$\underline{54.3}$ & \cellcolor{violet!10}$\underline{78.2}$ & \cellcolor{violet!10}$59.5$ & \cellcolor{violet!10}$68.0$ & \cellcolor{violet!10}$\underline{75.4}$ \\
\cellcolor{violet!10}COALA $\alpha = 1$ & \cellcolor{violet!10}$64.1$ & \cellcolor{violet!10}$\bm{76.1}$ & \cellcolor{violet!10}$\bm{72.8}$ & \cellcolor{violet!10}$\underline{82.8}$ & \cellcolor{violet!10}$\bm{56.0}$ & \cellcolor{violet!10}$77.5$ & \cellcolor{violet!10}$\underline{59.8}$ & \cellcolor{violet!10}$\underline{68.4}$ & \cellcolor{violet!10}$\bm{75.5}$ \\
\bottomrule
\end{tabular}
}

\end{table}

Training and fine-tuning models with specific constraints or regularization applied to the weights has proven to be an effective technique in recent years~\cite{LoRA,tsyganov2025matrixfreetwotoinfinityonetotwonorms}.
Fine-tuning methods often utilizes the concept of low-rank matrix approximations for initialization, see PiSSA~\cite{PiSSA} and CorDA~\cite{CorDA} appraoches. 
We investigate the application of our method for initializing LoRA~\cite{LoRA} adapters and demonstrate its advantages. 
The following proposition unifies these methods and also leads to a new method for $\alpha=1$.
\begin{prop}
\label{prop:corda}
The solution to the optimization problem
\begin{gather}
\label{eq:corda_task}
   \min_{\operatorname{rank}(W') \leq r} \operatorname{tr}\left( (W - W')(XX^\top)^\alpha(W - W')^\top \right)
\end{gather}
for an arbitrary $\alpha \geq 0$, $\alpha \in \mathbb{Z}$, is given by the formula:
\begin{gather*}
   W' = U_rU_r^\top W, \quad \text{where} \quad U\Sigma V^\top = W(XX^\top )^{\frac{\alpha}{2}}
\end{gather*}
and $U_r$ consists of the first $r$ columns of the matrix $U$.
\end{prop}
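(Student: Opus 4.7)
The plan is to reduce Proposition~\ref{prop:corda} to Proposition~\ref{prop:main_without_reg} via a symmetric matrix-square-root substitution. Since $XX^\top$ is symmetric positive semidefinite, it admits a spectral decomposition $XX^\top = Q\Lambda Q^\top$ with $\Lambda \succeq 0$, so the matrix $Y := (XX^\top)^{\alpha/2} = Q\,\Lambda^{\alpha/2}\,Q^\top$ is well-defined and symmetric PSD for every $\alpha \geq 0$ (in particular for every nonnegative integer $\alpha$). By symmetry of $Y$ we have $YY^\top = Y^2 = (XX^\top)^\alpha$.

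Next, using the identity $\operatorname{tr}(AA^\top) = \|A\|_F^2$ with $A = (W-W')Y$, the trace in~\eqref{eq:corda_task} rewrites as
\[
\operatorname{tr}\!\bigl((W-W')\,YY^\top\,(W-W')^\top\bigr) \;=\; \|(W-W')Y\|_F^2 \;=\; \|WY - W'Y\|_F^2.
\]
Thus problem~\eqref{eq:corda_task} is equivalent to $\min_{\operatorname{rank}(W') \leq r} \|WY - W'Y\|_F^2$, which is exactly the form of problem~\eqref{eq:main_without_reg_task} with the data matrix $X$ replaced by the symmetric matrix $Y$.

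Finally, I would invoke Proposition~\ref{prop:main_without_reg} with this substitution. It yields that a minimizer is given by $W' = U_r U_r^\top W$, where $U_r$ contains the first $r$ left singular vectors of $WY = W(XX^\top)^{\alpha/2}$, and the rank constraint $\operatorname{rank}(W') \leq r$ holds automatically since $U_r$ has $r$ columns. This is precisely the formula asserted in Proposition~\ref{prop:corda}.

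I do not anticipate any serious obstacle here: the lone subtlety is giving $(XX^\top)^{\alpha/2}$ an unambiguous meaning, which is settled by taking the unique PSD fractional power via the spectral decomposition above. As a by-product, the integrality assumption $\alpha \in \mathbb{Z}$ is not actually used; the reduction goes through for any real $\alpha \geq 0$.
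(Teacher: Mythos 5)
Your proof is correct and follows the same route as the paper: rewrite the trace as $\|(W-W')(XX^\top)^{\alpha/2}\|_F^2$ using the symmetric PSD matrix power, then apply Proposition~\ref{prop:main_without_reg} with $X$ replaced by $(XX^\top)^{\alpha/2}$. Your remark that the assumption $\alpha \in \mathbb{Z}$ is superfluous (any real $\alpha \ge 0$ works once the power is defined spectrally) is accurate and a fair observation beyond what the paper states.
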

\begin{proof}
    Note, that
    \begin{gather*}
        \operatorname{tr}\left( (W - W')(XX^\top)^\alpha(W - W')^\top \right) = \| (W - W')(XX^\top )^{\frac{\alpha}{2}} \|_F^2,
    \end{gather*}
    where $(XX^\top)^{\frac{\alpha}{2}} = S$ is such a square positive definite matrix that $SS^\top = (XX^\top)^\alpha$. Thus, applying Proposition~\ref{prop:main_without_reg}
    , we obtain the desired solution.
\end{proof}
Note that to obtain $(XX^\top)^{\frac{\alpha}{2}}$ one does not have to compute $XX^\top$ explicitly. One possible strategy is to take the SVD of $X$: $X = U\Sigma V^\top $ and then $(XX^\top)^{\frac{\alpha}{2}} = U \Sigma^{\frac{\alpha}{2}} U^\top$.

\begin{remark}
    For $\alpha=2$, the task~\eqref{eq:corda_task}
    becomes equivalent to the following minimization problem:
    \begin{gather*}
        \min_{\operatorname{rank}(W') \leq r} \operatorname{tr}\left( (W - W')(XX^\top)^2(W - W')^\top \right) = \min_{\operatorname{rank}(W') \leq r} \| (W - W')XX^\top \|_F^2.
    \end{gather*}
    Thus, applying Corollary~\ref{prop:general}
    , we arrive at the solution 
    \begin{gather*}
        W' = U_r\Sigma_rV_r^\top(XX^\top)^{-1},
    \end{gather*}
    where $U\Sigma V^\top  = WXX^\top$ and $U_r, \Sigma_r, V_r^\top$ are truncated matrix. 
    This solution is presented as an algorithm in the CorDA method. 
    
    By applying our Proposition~\ref{prop:main_without_reg}
    , we can obtain another way of solving this problem:
    \begin{gather*}
        W' = U_rU_r^\top W,
    \end{gather*}
    where \( U_r \) consists of the first \( r \) left singular vectors of the matrix \( WXX^\top \).
\end{remark}

 We show that the solution provided by the CorDA method solves the problem described in~\eqref{eq:corda_task}, when $\alpha=2$, and also applied our formulas for robustness purposes. Without them, in some scenarios, inversions of $XX^\top$ raised runtime errors due to singular matrices or lead to large numerical errors. Note also that for $\alpha=0$ the  minimization problem~\eqref{eq:corda_task} leads to the PiSSA method.
 We conduct experiments on the LLaMA3-1B-Instruct~\cite{LLama3} model.
 Table~\ref{table:llama_peft_comparison} suggests that the robustified version of CorDA (COALA, $\alpha=2$) significantly boosts the performance.
 Both robust versions for $\alpha=1$ and $\alpha=2$ yield results similar to PiSSA, though $\alpha=1$ performs slightly better.

\section{Limitations}
\label{limitations}
The limitations of our work are closely linked to the applicability and effectiveness of the weighted approximation approach. 
Thus, its efficiency is limited to tasks and domains where these methods perform well. 

\section{Conclusion}
In conclusion, we have presented a new, regularized inversion-free framework for context-aware low-rank approximation of LLM. We aimed to address the issue of numerical instability seen in previous works, and developed solutions for challenging scenarios such as large calibration matrices exceeding GPU memory capacity and near-singular input activation matrices. In our experiments, we observed favorable results in both model compression and fine-tuning scenarios compared to previous methods. 

\section*{Acknowledgments}
The work was supported by the grant for research centers in the field of AI provided by the Ministry of Economic Development of the Russian Federation in accordance with the agreement 000000C313925P4E0002 and the agreement with HSE University \textnumero\ 139-15-2025-009. 
The calculations were performed in part
through the computational resources of HPC facilities at HSE University~\cite{kostenetskiy2021hpc}.

The authors are also grateful to A. Osinsky for insightful suggestions that led to an improved theoretical bound.

\bibliography{ref}
\bibliographystyle{plain}


\newpage
\section*{NeurIPS Paper Checklist}

\begin{enumerate}

\item {\bf Claims}
    \item[] Question: Do the main claims made in the abstract and introduction accurately reflect the paper's contributions and scope?
    \item[] Answer: \answerYes{}
    \item[] Justification: Yes. The abstract and introduction give a concise overview of the paper’s methods and findings, accurately reflecting its contributions and scope.
    \item[] Guidelines:
    \begin{itemize}
        \item The answer NA means that the abstract and introduction do not include the claims made in the paper.
        \item The abstract and/or introduction should clearly state the claims made, including the contributions made in the paper and important assumptions and limitations. A No or NA answer to this question will not be perceived well by the reviewers. 
        \item The claims made should match theoretical and experimental results, and reflect how much the results can be expected to generalize to other settings. 
        \item It is fine to include aspirational goals as motivation as long as it is clear that these goals are not attained by the paper. 
    \end{itemize}

\item {\bf Limitations}
    \item[] Question: Does the paper discuss the limitations of the work performed by the authors?
    \item[] Answer: \answerYes{} 
    \item[] Justification: See Section~\ref{limitations}.
    \item[] Guidelines:
    \begin{itemize}
        \item The answer NA means that the paper has no limitation while the answer No means that the paper has limitations, but those are not discussed in the paper. 
        \item The authors are encouraged to create a separate "Limitations" section in their paper.
        \item The paper should point out any strong assumptions and how robust the results are to violations of these assumptions (e.g., independence assumptions, noiseless settings, model well-specification, asymptotic approximations only holding locally). The authors should reflect on how these assumptions might be violated in practice and what the implications would be.
        \item The authors should reflect on the scope of the claims made, e.g., if the approach was only tested on a few datasets or with a few runs. In general, empirical results often depend on implicit assumptions, which should be articulated.
        \item The authors should reflect on the factors that influence the performance of the approach. For example, a facial recognition algorithm may perform poorly when image resolution is low or images are taken in low lighting. Or a speech-to-text system might not be used reliably to provide closed captions for online lectures because it fails to handle technical jargon.
        \item The authors should discuss the computational efficiency of the proposed algorithms and how they scale with dataset size.
        \item If applicable, the authors should discuss possible limitations of their approach to address problems of privacy and fairness.
        \item While the authors might fear that complete honesty about limitations might be used by reviewers as grounds for rejection, a worse outcome might be that reviewers discover limitations that aren't acknowledged in the paper. The authors should use their best judgment and recognize that individual actions in favor of transparency play an important role in developing norms that preserve the integrity of the community. Reviewers will be specifically instructed to not penalize honesty concerning limitations.
    \end{itemize}

\item {\bf Theory assumptions and proofs}
    \item[] Question: For each theoretical result, does the paper provide the full set of assumptions and a complete (and correct) proof?
    \item[] Answer: \answerYes{} 
    \item[] Justification: Yes. The paper includes all necessary assumptions and offers thorough, correct proofs for each theoretical result.
    \item[] Guidelines:
    \begin{itemize}
        \item The answer NA means that the paper does not include theoretical results. 
        \item All the theorems, formulas, and proofs in the paper should be numbered and cross-referenced.
        \item All assumptions should be clearly stated or referenced in the statement of any theorems.
        \item The proofs can either appear in the main paper or the supplemental material, but if they appear in the supplemental material, the authors are encouraged to provide a short proof sketch to provide intuition. 
        \item Inversely, any informal proof provided in the core of the paper should be complemented by formal proofs provided in appendix or supplemental material.
        \item Theorems and Lemmas that the proof relies upon should be properly referenced. 
    \end{itemize}

    \item {\bf Experimental result reproducibility}
    \item[] Question: Does the paper fully disclose all the information needed to reproduce the main experimental results of the paper to the extent that it affects the main claims and/or conclusions of the paper (regardless of whether the code and data are provided or not)?
    \item[] Answer: \answerYes{} 
    \item[] Justification: Yes. We provide all the necessary information. Moreover, the Appendix Section F contained hyperparameters. 
    \item[] Guidelines:
    \begin{itemize}
        \item The answer NA means that the paper does not include experiments.
        \item If the paper includes experiments, a No answer to this question will not be perceived well by the reviewers: Making the paper reproducible is important, regardless of whether the code and data are provided or not.
        \item If the contribution is a dataset and/or model, the authors should describe the steps taken to make their results reproducible or verifiable. 
        \item Depending on the contribution, reproducibility can be accomplished in various ways. For example, if the contribution is a novel architecture, describing the architecture fully might suffice, or if the contribution is a specific model and empirical evaluation, it may be necessary to either make it possible for others to replicate the model with the same dataset, or provide access to the model. In general. releasing code and data is often one good way to accomplish this, but reproducibility can also be provided via detailed instructions for how to replicate the results, access to a hosted model (e.g., in the case of a large language model), releasing of a model checkpoint, or other means that are appropriate to the research performed.
        \item While NeurIPS does not require releasing code, the conference does require all submissions to provide some reasonable avenue for reproducibility, which may depend on the nature of the contribution. For example
        \begin{enumerate}
            \item If the contribution is primarily a new algorithm, the paper should make it clear how to reproduce that algorithm.
            \item If the contribution is primarily a new model architecture, the paper should describe the architecture clearly and fully.
            \item If the contribution is a new model (e.g., a large language model), then there should either be a way to access this model for reproducing the results or a way to reproduce the model (e.g., with an open-source dataset or instructions for how to construct the dataset).
            \item We recognize that reproducibility may be tricky in some cases, in which case authors are welcome to describe the particular way they provide for reproducibility. In the case of closed-source models, it may be that access to the model is limited in some way (e.g., to registered users), but it should be possible for other researchers to have some path to reproducing or verifying the results.
        \end{enumerate}
    \end{itemize}

\item {\bf Open access to data and code}
    \item[] Question: Does the paper provide open access to the data and code, with sufficient instructions to faithfully reproduce the main experimental results, as described in supplemental material?
    \item[] Answer: \answerYes{} 
    \item[] Justification: The section with the code repository can be found in Section 6, which contains the experiments.
    \item[] Guidelines:
    \begin{itemize}
        \item The answer NA means that paper does not include experiments requiring code.
        \item Please see the NeurIPS code and data submission guidelines (\url{https://nips.cc/public/guides/CodeSubmissionPolicy}) for more details.
        \item While we encourage the release of code and data, we understand that this might not be possible, so “No” is an acceptable answer. Papers cannot be rejected simply for not including code, unless this is central to the contribution (e.g., for a new open-source benchmark).
        \item The instructions should contain the exact command and environment needed to run to reproduce the results. See the NeurIPS code and data submission guidelines (\url{https://nips.cc/public/guides/CodeSubmissionPolicy}) for more details.
        \item The authors should provide instructions on data access and preparation, including how to access the raw data, preprocessed data, intermediate data, and generated data, etc.
        \item The authors should provide scripts to reproduce all experimental results for the new proposed method and baselines. If only a subset of experiments are reproducible, they should state which ones are omitted from the script and why.
        \item At submission time, to preserve anonymity, the authors should release anonymized versions (if applicable).
        \item Providing as much information as possible in supplemental material (appended to the paper) is recommended, but including URLs to data and code is permitted.
    \end{itemize}

\item {\bf Experimental setting/details}
    \item[] Question: Does the paper specify all the training and test details (e.g., data splits, hyperparameters, how they were chosen, type of optimizer, etc.) necessary to understand the results?
    \item[] Answer: \answerYes{} 
    \item[] Justification: The necessary details of the experiments are presented in Section~\ref{exps}.
    \item[] Guidelines:
    \begin{itemize}
        \item The answer NA means that the paper does not include experiments.
        \item The experimental setting should be presented in the core of the paper to a level of detail that is necessary to appreciate the results and make sense of them.
        \item The full details can be provided either with the code, in appendix, or as supplemental material.
    \end{itemize}

\item {\bf Experiment statistical significance}
    \item[] Question: Does the paper report error bars suitably and correctly defined or other appropriate information about the statistical significance of the experiments?
    \item[] Answer: \answerYes{} 
    \item[] Justification: Yes, we provided error bars for experiments where there was indeterminacy and the possibility to provide them.
    \item[] Guidelines:
    \begin{itemize}
        \item The answer NA means that the paper does not include experiments.
        \item The authors should answer "Yes" if the results are accompanied by error bars, confidence intervals, or statistical significance tests, at least for the experiments that support the main claims of the paper.
        \item The factors of variability that the error bars are capturing should be clearly stated (for example, train/test split, initialization, random drawing of some parameter, or overall run with given experimental conditions).
        \item The method for calculating the error bars should be explained (closed form formula, call to a library function, bootstrap, etc.)
        \item The assumptions made should be given (e.g., Normally distributed errors).
        \item It should be clear whether the error bar is the standard deviation or the standard error of the mean.
        \item It is OK to report 1-sigma error bars, but one should state it. The authors should preferably report a 2-sigma error bar than state that they have a 96\% CI, if the hypothesis of Normality of errors is not verified.
        \item For asymmetric distributions, the authors should be careful not to show in tables or figures symmetric error bars that would yield results that are out of range (e.g. negative error rates).
        \item If error bars are reported in tables or plots, The authors should explain in the text how they were calculated and reference the corresponding figures or tables in the text.
    \end{itemize}

\item {\bf Experiments compute resources}
    \item[] Question: For each experiment, does the paper provide sufficient information on the computer resources (type of compute workers, memory, time of execution) needed to reproduce the experiments?
    \item[] Answer: \answerYes{} 
    \item[] Justification: The necessary details of the experiments are presented in Section~\ref{exps}.
    \item[] Guidelines:
    \begin{itemize}
        \item The answer NA means that the paper does not include experiments.
        \item The paper should indicate the type of compute workers CPU or GPU, internal cluster, or cloud provider, including relevant memory and storage.
        \item The paper should provide the amount of compute required for each of the individual experimental runs as well as estimate the total compute. 
        \item The paper should disclose whether the full research project required more compute than the experiments reported in the paper (e.g., preliminary or failed experiments that didn't make it into the paper). 
    \end{itemize}
    
\item {\bf Code of ethics}
    \item[] Question: Does the research conducted in the paper conform, in every respect, with the NeurIPS Code of Ethics \url{https://neurips.cc/public/EthicsGuidelines}?
    \item[] Answer: \answerYes{} 
    \item[] Justification: Yes. Out research conforms with the NeurIPS Code of Ethics.
    \item[] Guidelines:
    \begin{itemize}
        \item The answer NA means that the authors have not reviewed the NeurIPS Code of Ethics.
        \item If the authors answer No, they should explain the special circumstances that require a deviation from the Code of Ethics.
        \item The authors should make sure to preserve anonymity (e.g., if there is a special consideration due to laws or regulations in their jurisdiction).
    \end{itemize}

\item {\bf Broader impacts}
    \item[] Question: Does the paper discuss both potential positive societal impacts and negative societal impacts of the work performed?
    \item[] Answer: \answerNA{}{} 
    \item[] Justification: There is no societal impact of the work performed.
    \item[] Guidelines:
    \begin{itemize}
        \item The answer NA means that there is no societal impact of the work performed.
        \item If the authors answer NA or No, they should explain why their work has no societal impact or why the paper does not address societal impact.
        \item Examples of negative societal impacts include potential malicious or unintended uses (e.g., disinformation, generating fake profiles, surveillance), fairness considerations (e.g., deployment of technologies that could make decisions that unfairly impact specific groups), privacy considerations, and security considerations.
        \item The conference expects that many papers will be foundational research and not tied to particular applications, let alone deployments. However, if there is a direct path to any negative applications, the authors should point it out. For example, it is legitimate to point out that an improvement in the quality of generative models could be used to generate deepfakes for disinformation. On the other hand, it is not needed to point out that a generic algorithm for optimizing neural networks could enable people to train models that generate Deepfakes faster.
        \item The authors should consider possible harms that could arise when the technology is being used as intended and functioning correctly, harms that could arise when the technology is being used as intended but gives incorrect results, and harms following from (intentional or unintentional) misuse of the technology.
        \item If there are negative societal impacts, the authors could also discuss possible mitigation strategies (e.g., gated release of models, providing defenses in addition to attacks, mechanisms for monitoring misuse, mechanisms to monitor how a system learns from feedback over time, improving the efficiency and accessibility of ML).
    \end{itemize}
    
\item {\bf Safeguards}
    \item[] Question: Does the paper describe safeguards that have been put in place for responsible release of data or models that have a high risk for misuse (e.g., pretrained language models, image generators, or scraped datasets)?
    \item[] Answer: \answerNA{} 
    \item[] Justification: Our paper has no such risk.
    \item[] Guidelines:
    \begin{itemize}
        \item The answer NA means that the paper poses no such risks.
        \item Released models that have a high risk for misuse or dual-use should be released with necessary safeguards to allow for controlled use of the model, for example by requiring that users adhere to usage guidelines or restrictions to access the model or implementing safety filters. 
        \item Datasets that have been scraped from the Internet could pose safety risks. The authors should describe how they avoided releasing unsafe images.
        \item We recognize that providing effective safeguards is challenging, and many papers do not require this, but we encourage authors to take this into account and make a best faith effort.
    \end{itemize}

\item {\bf Licenses for existing assets}
    \item[] Question: Are the creators or original owners of assets (e.g., code, data, models), used in the paper, properly credited and are the license and terms of use explicitly mentioned and properly respected?
    \item[] Answer: \answerYes{} 
    \item[] Justification: Yes. All used papers are properly cited in the text. 
    \item[] Guidelines:
    \begin{itemize}
        \item The answer NA means that the paper does not use existing assets.
        \item The authors should cite the original paper that produced the code package or dataset.
        \item The authors should state which version of the asset is used and, if possible, include a URL.
        \item The name of the license (e.g., CC-BY 4.0) should be included for each asset.
        \item For scraped data from a particular source (e.g., website), the copyright and terms of service of that source should be provided.
        \item If assets are released, the license, copyright information, and terms of use in the package should be provided. For popular datasets, \url{paperswithcode.com/datasets} has curated licenses for some datasets. Their licensing guide can help determine the license of a dataset.
        \item For existing datasets that are re-packaged, both the original license and the license of the derived asset (if it has changed) should be provided.
        \item If this information is not available online, the authors are encouraged to reach out to the asset's creators.
    \end{itemize}

\item {\bf New assets}
    \item[] Question: Are new assets introduced in the paper well documented and is the documentation provided alongside the assets?
    \item[] Answer: \answerNA{} 
    \item[] Justification: We don't release any new assets in our papper.
    \item[] Guidelines:
    \begin{itemize}
        \item The answer NA means that the paper does not release new assets.
        \item Researchers should communicate the details of the dataset/code/model as part of their submissions via structured templates. This includes details about training, license, limitations, etc. 
        \item The paper should discuss whether and how consent was obtained from people whose asset is used.
        \item At submission time, remember to anonymize your assets (if applicable). You can either create an anonymized URL or include an anonymized zip file.
    \end{itemize}

\item {\bf Crowdsourcing and research with human subjects}
    \item[] Question: For crowdsourcing experiments and research with human subjects, does the paper include the full text of instructions given to participants and screenshots, if applicable, as well as details about compensation (if any)? 
    \item[] Answer: \answerNA{} 
    \item[] Justification: The paper does not involve crowdsourcing nor research with human subjects.
    \item[] Guidelines:
    \begin{itemize}
        \item The answer NA means that the paper does not involve crowdsourcing nor research with human subjects.
        \item Including this information in the supplemental material is fine, but if the main contribution of the paper involves human subjects, then as much detail as possible should be included in the main paper. 
        \item According to the NeurIPS Code of Ethics, workers involved in data collection, curation, or other labor should be paid at least the minimum wage in the country of the data collector. 
    \end{itemize}

\item {\bf Institutional review board (IRB) approvals or equivalent for research with human subjects}
    \item[] Question: Does the paper describe potential risks incurred by study participants, whether such risks were disclosed to the subjects, and whether Institutional Review Board (IRB) approvals (or an equivalent approval/review based on the requirements of your country or institution) were obtained?
    \item[] Answer: \answerNA{} 
    \item[] Justification: The paper does not involve crowdsourcing nor research with human subjects.
    \item[] Guidelines:
    \begin{itemize}
        \item The answer NA means that the paper does not involve crowdsourcing nor research with human subjects.
        \item Depending on the country in which research is conducted, IRB approval (or equivalent) may be required for any human subjects research. If you obtained IRB approval, you should clearly state this in the paper. 
        \item We recognize that the procedures for this may vary significantly between institutions and locations, and we expect authors to adhere to the NeurIPS Code of Ethics and the guidelines for their institution. 
        \item For initial submissions, do not include any information that would break anonymity (if applicable), such as the institution conducting the review.
    \end{itemize}

\item {\bf Declaration of LLM usage}
    \item[] Question: Does the paper describe the usage of LLMs if it is an important, original, or non-standard component of the core methods in this research? Note that if the LLM is used only for writing, editing, or formatting purposes and does not impact the core methodology, scientific rigorousness, or originality of the research, declaration is not required.
    \item[] Answer: \answerNA{} 
    \item[] Justification: No. The paper does not mention any significant LLM usage in its core methods.
    \item[] Guidelines:
    \begin{itemize}
        \item The answer NA means that the core method development in this research does not involve LLMs as any important, original, or non-standard components.
        \item Please refer to our LLM policy (\url{https://neurips.cc/Conferences/2025/LLM}) for what should or should not be described.
    \end{itemize}

\end{enumerate}


\newpage
\appendix

\section{General Weighted Low-Rank Approximation Problem}
\label{appx:kron}

\begin{definition}[General Weighted Low-Rank Approximation Problem]
Given a matrix $W \in \mathbb{R}^{m \times n}$, we aim to find a matrix $W'$ of rank at most $r$, such that the objective function 
\begin{gather}
\label{general_prob}
    \min_{W': \operatorname{rank}(W') \leq r} \operatorname{vec}\{{W - W'}\}^\top Q\operatorname{vec}\{W - W'\}
\end{gather}
is minimized, where $\operatorname{vec}\{\cdot \}$ denotes the vectorization operator, transforming a given matrix into a column vector by stacking its columns on top of each other. The matrix $Q \in \mathbb{R}^{mn \times mn}$ represents a positive definite matrix.
\end{definition}

\begin{theorem}(From~\cite{manton2003geometry})
\label{smart_t}
In~\eqref{general_prob}, if \( Q = Q_1 \otimes Q_2 \), where \( Q_1 \in \mathbb{R}^{m \times m} \) and \( Q_2 \in \mathbb{R}^{n \times n} \) are both positive definite and symmetric, then the solution \( W' \) of~\eqref{general_prob} is given by the following closed-form expression. 
Let \( Q_2^{1/2} W Q_1^{1/2} = U \Sigma V^\top \) be the compact SVD, where \( Q_1^{1/2} \) is the unique positive definite symmetric matrix such that \( Q_1^{1/2} Q_1^{1/2} = Q_1 \) and similarly for \( Q_2^{1/2} \).
Then, \( W' = Q_2^{-1/2} U \Sigma_r V^\top Q_1^{-1/2} \), where \( \Sigma_r \) is obtained from \( \Sigma \) by setting all but the first \( r \) singular values to zero.
Here, $\otimes$ is the Kronecker product~\cite{horn1994topics}.
\end{theorem}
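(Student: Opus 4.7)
The plan is to reduce the weighted problem to a standard unweighted low-rank approximation problem through a congruence change of variables, then invoke the Eckart--Young--Mirsky theorem. The Kronecker structure $Q = Q_1 \otimes Q_2$ is exactly what makes this reduction possible.

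First, I would rewrite the quadratic form using two standard identities. With $M = W - W'$, using $(Q_1 \otimes Q_2)\operatorname{vec}\{M\} = \operatorname{vec}\{Q_2 M Q_1^\top\} = \operatorname{vec}\{Q_2 M Q_1\}$ (the last equality by symmetry of $Q_1$) together with $\operatorname{vec}\{A\}^\top \operatorname{vec}\{B\} = \operatorname{tr}(A^\top B)$, I obtain
\[
\operatorname{vec}\{M\}^\top (Q_1 \otimes Q_2) \operatorname{vec}\{M\} = \operatorname{tr}(M^\top Q_2 M Q_1).
\]
Since $Q_1$ and $Q_2$ are symmetric positive definite, they admit unique symmetric positive definite square roots $Q_1^{1/2}, Q_2^{1/2}$. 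Substituting $Q_i = Q_i^{1/2} Q_i^{1/2}$ and applying cyclicity of the trace shows this equals $\|Q_2^{1/2} M Q_1^{1/2}\|_F^2$.

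Next, I would introduce the bijective change of variables $\widetilde W' = Q_2^{1/2} W' Q_1^{1/2}$. Since $Q_1^{1/2}$ and $Q_2^{1/2}$ are invertible, $\operatorname{rank}(\widetilde W') = \operatorname{rank}(W')$, so the rank-$r$ constraint is preserved. The problem becomes
\[
\min_{\operatorname{rank}(\widetilde W') \leq r} \, \bigl\|Q_2^{1/2} W Q_1^{1/2} - \widetilde W' \bigr\|_F^2,
\]
which is the classical best rank-$r$ Frobenius-norm approximation of the fixed matrix $Q_2^{1/2} W Q_1^{1/2}$.

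Finally, the Eckart--Young--Mirsky theorem gives the minimizer $\widetilde W' = U \Sigma_r V^\top$, where $Q_2^{1/2} W Q_1^{1/2} = U \Sigma V^\top$ is the compact SVD. Undoing the change of variables yields the stated formula $W' = Q_2^{-1/2} U \Sigma_r V^\top Q_1^{-1/2}$. The main obstacle I anticipate is cleanly deploying the \emph{vec}--Kronecker--trace identities in the first step and verifying the required symmetry/square-root properties; after that, the rest is a textbook consequence of Eckart--Young--Mirsky combined with the invertibility of the weighting square roots, which guarantees that rank is preserved under the congruence.
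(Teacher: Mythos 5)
Your argument is correct and is the standard reduction of the Kronecker-weighted problem to an unweighted one. The paper itself does not prove Theorem~\ref{smart_t}; it simply defers to the cited reference \cite{manton2003geometry}, so there is no in-paper proof to compare against. Your chain of steps — the identities $(Q_1 \otimes Q_2)\operatorname{vec}\{M\} = \operatorname{vec}\{Q_2 M Q_1\}$ (using symmetry of $Q_1$) and $\operatorname{vec}\{A\}^\top\operatorname{vec}\{B\} = \operatorname{tr}(A^\top B)$, cyclicity of the trace to get $\|Q_2^{1/2} M Q_1^{1/2}\|_F^2$, the rank-preserving bijective substitution $\widetilde W' = Q_2^{1/2} W' Q_1^{1/2}$, Eckart--Young--Mirsky, and the inverse substitution — is exactly the clean derivation one would give, and it is internally consistent with Corollary~\ref{prop:general} later in the paper, which specializes the same vec/Kronecker manipulations to $Q_2 = I$, $Q_1 = XX^\top$. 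The only caveat worth a one-line remark is that Eckart--Young--Mirsky yields a \emph{unique} minimizer only when $\sigma_r \neq \sigma_{r+1}$; otherwise your $\widetilde W' = U\Sigma_r V^\top$ is one among several optimal choices, so the stated $W'$ is ``a'' rather than ``the'' solution in the degenerate case. This is a feature of the theorem statement, not a flaw in your proof.
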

\begin{proof}
    See work~\cite{manton2003geometry}.
\end{proof}

Observe that if we choose $Q_2 = I$ and $Q_1 = X X^\top$, we immediately obtain a solution to the problem~\eqref{eq:formulation}
. More generally, note that any square matrix $S$ satisfying $S S^\top = X X^\top$ can be employed in this construction. For instance, a standard choice would be the Cholesky factor of $XX^\top$.

\begin{corollary}
\label{prop:general}
    Let \( W \) and \( X \) be arbitrary matrices belonging to \( \mathbb{R}^{m \times n} \) and \( \mathbb{R}^{n \times k} \) respectively, with \( X \) having full row rank..
The solution to the optimization problem~\eqref{eq:formulation}
can be obtained using the formula
\begin{gather*}
    W' = U\Sigma_rV^\top (XX^\top)^{-1/2},
\end{gather*}
where $U\Sigma V^\top = W(XX^\top)^{1/2}$ is SVD.
\end{corollary}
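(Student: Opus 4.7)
\textbf{Proof proposal for Corollary~\ref{prop:general}.} The plan is to recast the objective~\eqref{eq:formulation} as a weighted inner product of vectorized variables in Kronecker form and then apply Theorem~\ref{smart_t} directly, with the full-row-rank assumption on $X$ ensuring the required positive-definiteness.

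First I would rewrite the Frobenius norm via the standard identity for column-wise vectorization $\operatorname{vec}(ABC) = (C^\top \otimes A)\operatorname{vec}(B)$. Setting $E = W - W'$, this gives $\operatorname{vec}(EX) = (X^\top \otimes I_m)\operatorname{vec}(E)$, and hence
\[
\|(W-W')X\|_F^2 = \|\operatorname{vec}(EX)\|_2^2 = \operatorname{vec}\{E\}^\top \bigl(XX^\top \otimes I_m\bigr)\operatorname{vec}\{E\}.
\]
This matches the form of~\eqref{general_prob} with $Q = Q_1 \otimes Q_2$ for the specific choice $Q_1 = XX^\top$ and $Q_2 = I_m$.

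Next I would verify the hypotheses of Theorem~\ref{smart_t}. The matrix $Q_2 = I_m$ is trivially symmetric and positive definite, while $Q_1 = XX^\top$ is symmetric positive semidefinite in general, and positive definite precisely when $X$ has full row rank — exactly the assumption stated in the corollary. Thus both $Q_1^{1/2} = (XX^\top)^{1/2}$ and $Q_2^{1/2} = I_m$ are well-defined positive definite square roots, and $Q_1^{-1/2} = (XX^\top)^{-1/2}$ exists.

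Finally I would plug these choices into the closed form of Theorem~\ref{smart_t}. The relevant SVD becomes $Q_2^{1/2} W Q_1^{1/2} = W(XX^\top)^{1/2} = U\Sigma V^\top$, which is exactly the decomposition used in the statement of the corollary. The solution formula then reads
\[
W' \;=\; Q_2^{-1/2} U \Sigma_r V^\top Q_1^{-1/2} \;=\; U\Sigma_r V^\top (XX^\top)^{-1/2},
\]
as claimed. I do not anticipate a serious obstacle: the only nontrivial point is matching the Kronecker convention so that the roles of $Q_1$ and $Q_2$ are correctly identified (in particular that the right-hand weight $X$ produces a factor on the right in the resulting SVD target $W(XX^\top)^{1/2}$), and ensuring that the full-row-rank hypothesis is used exactly where the invertibility of $(XX^\top)^{1/2}$ is required.
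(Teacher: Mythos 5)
Your proof is correct and follows essentially the same route as the paper: both rewrite $\|(W-W')X\|_F^2$ in the Kronecker-weighted vectorized form with $Q = XX^\top \otimes I$, identify $Q_1 = XX^\top$ and $Q_2 = I$, invoke the full-row-rank hypothesis for positive definiteness, and then apply Theorem~\ref{smart_t}. The only cosmetic difference is that you reach the Kronecker form via $\operatorname{vec}(EX) = (X^\top \otimes I)\operatorname{vec}(E)$ and then square, whereas the paper goes through the trace identity $\operatorname{tr}(AB) = \operatorname{vec}\{A\}^\top\operatorname{vec}\{B^\top\}$; both lead to the same quadratic form.
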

\begin{proof}
Note, that
\begin{gather*}
    \|(W - W')X\|_F = \operatorname{tr}(\left(W-W') XX^\top (W-W')^\top \right) = \left[\operatorname{tr}(AB) = \operatorname{vec}\{A\}^\top\operatorname{vec}\{B^\top\}\right] = \\ =\operatorname{vec}\{W - W'\}^\top \operatorname{vec}\{ (W - W')XX^\top\} = \\ = \operatorname{vec}\{W - W'\}^\top \operatorname{vec}\{ I(W - W')XX^\top\}= \left[ \operatorname{vec}\{ABC\} = (C^\top \otimes A)\operatorname{vec}\{B\}\right] =\\ =  \operatorname{vec}\{{W - W'}\}^\top (XX^\top \otimes I) \operatorname{vec}\{W - W'\}.
\end{gather*}
So, if $X$ has a full rank, we can apply Theorem~\ref{smart_t}, where $Q_1 = XX^\top$, $Q_2 = I$:
\begin{gather*}
    W' = I^{-1/2}U\Sigma_rV^\top(XX^\top)^{-1/2} = U\Sigma_rV^\top(XX^\top)^{-1/2},
\end{gather*}
where $U \Sigma V^\top = W (XX^\top)^{1/2}$.
\end{proof}

\section{SVD-LLM Method}
\label{appx:algs}

In this section, we present pseudocode for several approaches to solve the problem, including the method outlined in Section~\ref{appx:kron}, an approach leveraging the Cholesky decomposition, and one utilizing the square root of the matrix $XX^\top$.

The Algorithm~\ref{alg:svd_llm} from the work~\cite{SVD-LLM} provides the solution via Cholesky decomposition for the matrix $XX^\top$, while the Algorithm~\ref{alg:svd_llm_2} from the work~\cite{SVD-LLM2} finds the solution via the search for symmetric matrix square root of $XX^\top$ through SVD.

\begin{algorithm}[H]
\caption{\textcolor{violet}{SVD-LLM method~\cite{SVD-LLM}}}
\label{alg:svd_llm}
\begin{algorithmic}[1]
\Statex \textbf{Input:} $W \in \mathbb{R}^{m \times n}$, $X \in \mathbb{R}^{n \times k}$, $r \in \mathbb{N}$
\Statex \textbf{Output:} $A \in \mathbb{R}^{m \times r}$, $B \in \mathbb{R}^{r \times n}$
\State \textbf{Compute} the upper triangular matrix $S$ from the Cholesky decomposition of $XX^\top$:
\Statex \quad $S \gets \operatorname{cholesky}(XX^\top)$
\State \textbf{Compute} the singular value decomposition of $W S$:
\Statex \quad $[U, \Sigma, V^\top] \gets \operatorname{svd}(W S)$
\State \textbf{Let} $U_r, \Sigma_r, V_r = U[:, :r], \Sigma[:r, :r], V_r[:, :r]$
\State \textbf{Set} $A \gets U_r$
\State \textbf{Set} $B \gets \Sigma_r V_r^\top S^{-1}$
\State \Return $A$, $B$
\end{algorithmic}
\end{algorithm}

\begin{algorithm}[H]
\caption{\textcolor{violet}{SVD-LLM V2 method~\cite{SVD-LLM2}}}
\label{alg:svd_llm_2}
\begin{algorithmic}[1]
\Statex \textbf{Input:} $W \in \mathbb{R}^{m \times n}$, $X \in \mathbb{R}^{n \times k}$, $r \in \mathbb{N}$
\Statex \textbf{Output:} $A \in \mathbb{R}^{m \times r}$, $B \in \mathbb{R}^{r \times n}$
\State \textbf{Compute} the SVD of $X X^\top$:
\Statex \quad $[U_s, S, V_s^\top] \gets \operatorname{svd}(X X^\top)$
\State \textbf{Compute} $M \gets W U_s S^{1/2}$
\State \textbf{Compute} the SVD of $M$:
\Statex \quad $[U, \Sigma, V^\top] \gets \operatorname{svd}(M)$
\State \textbf{Let} $U_r, \Sigma_r, V_r = U[:, :r], \Sigma[:r, :r], V_r[:, :r]$
\State \textbf{Compute} $S^{-1/2}$
\State \textbf{Set} $A \gets {U}_r$
\State \textbf{Set} $B \gets {\Sigma}_r {V}_r^\top S^{-1/2} U_s^\top$
\State \Return $A$, $B$
\end{algorithmic}
\end{algorithm}

\section{Basics of Low-Rank Approximation}
This section presents statements of established results as well as references to their original sources. Although readers may choose to skip this part, it serves to provide greater clarity in the subsequent proofs when referring to these well-known findings.

\begin{theorem}(Eckart-Young-Mirsky)
    \label{eckart_young}
    Let \(A \in \mathbb{R}^{m \times n}\) have the SVD
\[
A = U \,\Sigma\, V^\top,
\]
where \(\Sigma = \mathrm{diag}(\sigma_1, \sigma_2, \ldots, \sigma_p)\) with
\(\sigma_1 \ge \sigma_2 \ge \cdots \ge \sigma_p \ge 0\) and \(p = \min(m,n)\).
For any integer \(r\) with \(1 \le r < p\), define the rank-\(r\) matrix
\[
A_r = U_r \,\Sigma_r\, V_r^\top
\]
by keeping only the top \(r\) singular values \(\sigma_1,\dots,\sigma_r\) in \(\Sigma\),
along with the corresponding columns of \(U\) and \(V\). Then \(A_r\) is a best
rank-\(r\) approximation to \(A\) in the Frobenius norm, i.e.
\begin{gather*}
    A_r \;=\; \underset{\mathrm{rank}(B)\,\le r}{\mathrm{arg\,min}}\;\bigl\|A-B\bigr\|_F.
\end{gather*}
Moreover, if \(\sigma_r \neq \sigma_{r+1}\), then this best rank-\(r\) approximation
\(A_r\) is unique.
\end{theorem}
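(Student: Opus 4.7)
The plan is to reduce the optimization to a best orthogonal-projection problem and then to invoke the variational characterization of singular values (Ky Fan) to identify the optimal subspace. The argument naturally splits into three steps: (a) a projection reduction, (b) identification of the optimal $r$-dimensional subspace, (c) uniqueness under the spectral gap.

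\textbf{Projection reduction.} Let $B$ satisfy $\mathrm{rank}(B)\le r$ and let $\mathcal V\subseteq\mathbb R^m$ be its column space, so $\dim\mathcal V\le r$. Writing $\Pi_{\mathcal V}$ for the orthogonal projector onto $\mathcal V$, the matrix $A-B$ decomposes orthogonally into a component in $\mathcal V$ and one in $\mathcal V^\perp$, giving by Pythagoras
\[
\|A-B\|_F^2=\|(I-\Pi_{\mathcal V})A\|_F^2+\|\Pi_{\mathcal V}A-B\|_F^2.
\]
For fixed $\mathcal V$ the second term is minimized by $B=\Pi_{\mathcal V}A$, which still has rank at most $r$. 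Thus the problem reduces to
\[
\min_{\mathrm{rank}(B)\le r}\|A-B\|_F^2=\|A\|_F^2-\max_{\dim\mathcal V\le r}\|\Pi_{\mathcal V}A\|_F^2.
\]

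\textbf{Optimal subspace.} Expanding $\|\Pi_{\mathcal V}A\|_F^2=\mathrm{tr}(A^\top\Pi_{\mathcal V}A)$ and substituting $A=U\Sigma V^\top$, I obtain $\|\Pi_{\mathcal V}A\|_F^2=\sum_{i=1}^p\sigma_i^2\,\|\Pi_{\mathcal V}u_i\|^2$. Ky Fan's trace maximum principle applied to the positive semi-definite operator $AA^\top$, whose eigenvalues are exactly $\sigma_i^2$, yields
\[
\max_{\dim\mathcal V\le r}\mathrm{tr}(A^\top\Pi_{\mathcal V}A)=\sigma_1^2+\cdots+\sigma_r^2,
\]
with the maximum attained at $\mathcal V^\star=\mathrm{span}(u_1,\dots,u_r)$. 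Plugging this back in gives the minimizer $B=\Pi_{\mathcal V^\star}A=U_rU_r^\top A=A_r$ and the value $\|A-A_r\|_F^2=\sigma_{r+1}^2+\cdots+\sigma_p^2$, which matches the formula obtained by direct computation from the SVD of $A$, confirming optimality.

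\textbf{Uniqueness.} If $\sigma_r>\sigma_{r+1}$, the strict spectral gap in the eigenvalues of $AA^\top$ makes the top-$r$ invariant subspace unique; consequently the maximizer $\mathcal V^\star$ in the Ky Fan step above is the unique maximizer, and since the inner minimizer is determined by $\mathcal V^\star$ as $B=\Pi_{\mathcal V^\star}A$, the overall minimizer $A_r$ is unique. The main obstacle I expect is the clean handling of the Ky Fan step (equivalently, verifying that no other $r$-dimensional subspace captures the same total squared norm once the gap is strict), and carefully tracking equality cases so that uniqueness of the subspace transfers to uniqueness of $B$; the rest of the argument is routine.
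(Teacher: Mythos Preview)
The paper does not actually prove this theorem; its entire proof is the one-line citation ``See~\cite{GOLUB1987317}.'' So there is no in-paper argument to compare against, and your proposal supplies what the paper deliberately outsources.

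Your route---reduce to optimizing over the column space via the Pythagorean split, then invoke Ky~Fan's trace-maximum principle for $AA^\top$---is a standard and correct proof of Eckart--Young--Mirsky in the Frobenius norm. Steps~(a) and~(b) are clean; the identity $\|\Pi_{\mathcal V}A\|_F^2=\sum_i\sigma_i^2\|\Pi_{\mathcal V}u_i\|^2$ is exactly $\mathrm{tr}(\Pi_{\mathcal V}AA^\top)$, so Ky~Fan applies directly and gives both the optimal value $\sum_{i>r}\sigma_i^2$ and the optimal subspace $\mathrm{span}(u_1,\dots,u_r)$.

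For step~(c), the one place to be explicit is the equality-case bookkeeping you flag yourself: if $B$ is optimal with column space $\mathcal V$, the Pythagorean split forces $B=\Pi_{\mathcal V}A$ \emph{and} forces $\mathcal V$ to be a trace maximizer among subspaces of dimension at most $r$. Under the gap $\sigma_r>\sigma_{r+1}\ge 0$ one has $\sigma_r>0$, so any subspace of dimension strictly less than $r$ achieves a strictly smaller trace; hence $\dim\mathcal V=r$, and Ky~Fan's equality case then pins $\mathcal V=\mathrm{span}(u_1,\dots,u_r)$ uniquely. Once you spell out those two sentences, the uniqueness transfer from subspace to $B$ is airtight.
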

\begin{proof}
    See~\cite{GOLUB1987317}.
\end{proof}

\begin{corollary}
\label{short_formula}
The solution to the low-rank approximation problem
\begin{gather*}
\min_{\substack{\operatorname{rank}(A_k) \leq k}} \|A - A_k\|_F,
\end{gather*}
can be obtained using the formula $A_k = U_k U_k^\top A$ or $A_k = AV_kV_k^\top$, where the SVD of matrix \( A \) is given by
\[
A = \begin{bmatrix}
    U_k & U_k^{\perp}
\end{bmatrix} \begin{bmatrix}
    \Sigma_k & 0 \\
    0 & \Sigma_k'
\end{bmatrix} \begin{bmatrix}
    V_k & V_k^{\perp}
\end{bmatrix}^\top,
\]
\end{corollary}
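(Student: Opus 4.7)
The plan is to reduce the claim to the Eckart–Young–Mirsky theorem (Theorem~\ref{eckart_young}) by a direct algebraic manipulation of the SVD. By that theorem, an optimal rank-$k$ approximation is
\[
A_k = U_k \Sigma_k V_k^\top,
\]
where $U_k$ and $V_k$ are the first $k$ columns of $U$ and $V$, and $\Sigma_k = \operatorname{diag}(\sigma_1,\ldots,\sigma_k)$. So it suffices to verify that the two projector formulas $U_k U_k^\top A$ and $A V_k V_k^\top$ both equal $U_k \Sigma_k V_k^\top$.

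First I would compute $U_k^\top A$ by plugging in the full SVD
\[
U_k^\top A = U_k^\top \bigl[\,U_k \ U_k^\perp\,\bigr]\begin{bmatrix}\Sigma_k & 0 \\ 0 & \Sigma_k'\end{bmatrix}\begin{bmatrix}V_k & V_k^\perp\end{bmatrix}^\top,
\]
and use the orthonormality relations $U_k^\top U_k = I_k$ and $U_k^\top U_k^\perp = 0$ to collapse this product to $\Sigma_k V_k^\top$. Multiplying on the left by $U_k$ then yields $U_k U_k^\top A = U_k \Sigma_k V_k^\top = A_k$, which by Theorem~\ref{eckart_young} is a best rank-$k$ Frobenius approximation. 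The second identity $A V_k V_k^\top = A_k$ is entirely symmetric: compute $A V_k$ using $V^\top V_k = \begin{bmatrix} I_k \\ 0 \end{bmatrix}$ to get $A V_k = U_k \Sigma_k$, and right-multiply by $V_k^\top$.

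Finally, I would note that although the rank constraint $\operatorname{rank}(U_k U_k^\top A)\le k$ is immediate from the fact that $U_k$ has only $k$ columns (and similarly on the right), it is worth stating this explicitly so that the two candidate matrices are feasible before invoking Theorem~\ref{eckart_young}. I do not expect any genuine obstacle here: the entire argument is bookkeeping on block-partitioned SVD matrices, and the only subtlety is keeping dimensions of $\Sigma_k$, $\Sigma_k'$, $U_k^\perp$, and $V_k^\perp$ consistent with whatever convention (full vs. compact SVD) is adopted in the displayed decomposition.
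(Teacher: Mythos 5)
Your proposal is correct and coincides with the intended argument: the paper states this as an immediate corollary of the Eckart--Young--Mirsky theorem (Theorem~\ref{eckart_young}) and leaves the verification implicit, which is exactly the block-SVD bookkeeping you carry out to show $U_k U_k^\top A = A V_k V_k^\top = U_k \Sigma_k V_k^\top$. Nothing is missing; the rank-feasibility remark is a nice touch but, as you say, trivial since $U_k$ (respectively $V_k$) has only $k$ columns.
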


\begin{theorem}[Davis-Kahan-Wedin $\sin(\Theta)$ Theorem]
\label{devis}
Let \( A \in \mathbb{R}^{m \times n} \) be a matrix such that its \( r \)-th and \( (r+1) \)-th singular values satisfy \( \sigma_r(A) \neq \sigma_{r+1}(A) \). Let \( E \in \mathbb{R}^{m \times n} \) be a perturbation matrix, and define \( \hat{A} = A + E \). Let \( U_r \in \mathbb{R}^{m \times r} \) and \( \hat{U}_r \in \mathbb{R}^{m \times r} \) be matrices whose columns consist of the first \( r \) left singular vectors of \( A \) and \( \hat{A} \), respectively. Then,
\[
\left\| U_r U_r^\top - \hat{U}_r \hat{U}_r^\top \right\|_2 \leq \frac{2\| E \|_2}{\sigma_r(A) - \sigma_{r+1}(A)}.
\]
\end{theorem}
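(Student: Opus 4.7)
The plan is to reduce this singular-vector statement to the classical Davis-Kahan $\sin\Theta$ theorem for eigenvectors of a symmetric matrix, via the Hermitian dilation trick. Define
\[
\mathcal{D}(X) \;:=\; \begin{pmatrix} 0 & X \\ X^\top & 0 \end{pmatrix}
\]
for any $X \in \mathbb{R}^{m \times n}$; this yields a symmetric $(m{+}n) \times (m{+}n)$ matrix, and the map $\mathcal{D}$ is linear, so $\mathcal{D}(\hat A) = \mathcal{D}(A) + \mathcal{D}(E)$. A direct computation from $A = U \Sigma V^\top$ shows that the non-zero eigenvalues of $\mathcal{D}(A)$ are exactly $\pm \sigma_i(A)$, with unit eigenvectors $\tfrac{1}{\sqrt{2}}\binom{u_i}{\pm v_i}$; applying the same observation to $E$ gives $\|\mathcal{D}(E)\|_2 = \|E\|_2$.

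I would then write the spectral projector of $\mathcal{D}(A)$ onto its top $r$ positive eigenvalues as
\[
\mathcal{P} \;=\; \tfrac{1}{2}\begin{pmatrix} U_r U_r^\top & U_r V_r^\top \\ V_r U_r^\top & V_r V_r^\top \end{pmatrix},
\]
and define $\hat{\mathcal{P}}$ analogously for $\mathcal{D}(\hat A)$. The classical Hermitian Davis-Kahan theorem then yields
\[
\|\mathcal{P} - \hat{\mathcal{P}}\|_2 \;\leq\; \frac{\|\mathcal{D}(E)\|_2}{\delta},
\]
where $\delta$ is the spectral gap between the top $r$ positive eigenvalues of $\mathcal{D}(A)$ and the rest of its spectrum. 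A short case analysis (the next eigenvalue below $+\sigma_r$ of $\mathcal{D}(A)$ is either $+\sigma_{r+1}$, a zero arising when $m \neq n$, or some $-\sigma_i \leq 0$) shows $\delta \geq \sigma_r(A) - \sigma_{r+1}(A)$, with the convention $\sigma_{r+1}(A) = 0$ when $r = \min(m,n)$. Finally, the upper-left $m \times m$ block of $2(\mathcal{P} - \hat{\mathcal{P}})$ is exactly $U_r U_r^\top - \hat U_r \hat U_r^\top$, and because the spectral norm of any principal submatrix is bounded by the spectral norm of the full matrix, this extraction produces the factor of $2$ in the claimed bound.

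The main obstacle is the non-uniqueness of singular vectors: when $A$ or $\hat A$ has repeated singular values, $U_r$ and $\hat U_r$ are only defined up to orthogonal rotations within degenerate subspaces, although the projectors $U_r U_r^\top$ and $\hat U_r \hat U_r^\top$ remain well defined. I would therefore formulate every step in terms of projectors and verify the block-structure formula for $\mathcal{P}$ by writing it as $\sum_{i=1}^{r} \tfrac{1}{2}\binom{u_i}{v_i}\binom{u_i}{v_i}^\top$, which is manifestly invariant under a change of orthonormal basis inside any degenerate eigenspace. A secondary care point is that the assumption $\sigma_r(A) \neq \sigma_{r+1}(A)$ is precisely what is needed to guarantee $\delta > 0$ in the Davis-Kahan bound and to make $\mathcal{P}$ unambiguously the projector onto the top $r$ positive eigenvalues of $\mathcal{D}(A)$.
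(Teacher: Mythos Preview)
The paper does not actually prove this theorem; it simply cites O'Rourke--Vu--Wang and moves on. So there is no ``paper's proof'' to compare against, and your Hermitian-dilation sketch is a perfectly standard route to a Wedin-type bound.

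That said, there is a genuine constant issue in your step invoking the ``classical Hermitian Davis--Kahan theorem.'' The inequality $\|\mathcal{P}-\hat{\mathcal{P}}\|_2 \le \|\mathcal{D}(E)\|_2/\delta$, with $\delta$ equal to the eigengap of $\mathcal{D}(A)$ \emph{alone}, is not a correct form of Davis--Kahan. The original $\sin\Theta$ theorem requires the gap to separate the chosen eigenvalues of one matrix from the complementary eigenvalues of the \emph{other} matrix; once you convert to a gap in $\mathcal{D}(A)$ only (via Weyl plus the trivial bound $\|\mathcal{P}-\hat{\mathcal{P}}\|_2\le 1$), you pick up a factor of~$2$. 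A concrete counterexample to your stated version: for $A=\mathrm{diag}(1,0)$ and $E=0.4\begin{psmallmatrix}-1&1\\1&1\end{psmallmatrix}$ one gets $\|P_1-\hat P_1\|_2\approx 0.615 > \|E\|_2/\delta \approx 0.566$. Combined with the further factor of~$2$ you (correctly) incur when extracting the upper-left block of $2(\mathcal{P}-\hat{\mathcal{P}})$, your argument as written yields the constant~$4$, not the stated~$2$.

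A clean fix within the dilation framework: instead of $\mathcal{P}$, work with the projector $\mathcal{Q}$ onto the $2r$ eigenvalues $\pm\sigma_1(A),\dots,\pm\sigma_r(A)$ of $\mathcal{D}(A)$, which is block-diagonal, $\mathcal{Q}=\mathrm{diag}(U_rU_r^\top,\,V_rV_r^\top)$. Then the block extraction costs no extra factor, and applying the factor-$2$ Davis--Kahan bound (gap in $\mathcal{D}(A)$ alone, still $\sigma_r(A)-\sigma_{r+1}(A)$) gives exactly $\|U_rU_r^\top-\hat U_r\hat U_r^\top\|_2 \le \|\mathcal{Q}-\hat{\mathcal{Q}}\|_2 \le 2\|E\|_2/(\sigma_r(A)-\sigma_{r+1}(A))$.
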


\begin{proof}
This result is proved in ``Random perturbation of low rank matrices: Improving classical bounds''~\cite{o2018sine}.
\end{proof}

\begin{lemma}
    Let $A \in \mathbb{R}^{d \times r}$ be a rank-$r$ matrix. Then for any $B \in \mathbb{R}^{r \times k}$ it holds that 
    \begin{gather*}
        \sigma_{\min}(A)\,\|B\|_{F}\;\le\;\|AB\|_{F}\;\le\;\sigma_{\max}(A)\,\|B\|_{F} = \|A\|_2\|B\|_F.
    \end{gather*}
\end{lemma}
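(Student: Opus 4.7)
The plan is to reduce the inequality to a straightforward diagonal estimate via the SVD. Since $A \in \mathbb{R}^{d \times r}$ has rank $r$, I would write its compact SVD as $A = U\Sigma V^\top$, where $U \in \mathbb{R}^{d \times r}$ has orthonormal columns, $V \in \mathbb{R}^{r \times r}$ is orthogonal, and $\Sigma = \diag(\sigma_1,\dots,\sigma_r)$ with $\sigma_1 \geq \dots \geq \sigma_r > 0$ (positivity is guaranteed by the full-rank hypothesis, so $\sigma_r = \sigma_{\min}(A)$).

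Next, I would exploit orthogonal invariance of the Frobenius norm in two steps. First, $\|AB\|_F^2 = \|U \Sigma V^\top B\|_F^2 = \operatorname{tr}(B^\top V \Sigma U^\top U \Sigma V^\top B) = \|\Sigma V^\top B\|_F^2$, because $U^\top U = I_r$. Second, setting $C := V^\top B \in \mathbb{R}^{r \times k}$, orthogonality of $V$ gives $\|C\|_F = \|B\|_F$. So it remains to sandwich $\|\Sigma C\|_F$, which is immediate from the entrywise identity
\begin{equation*}
\|\Sigma C\|_F^2 \;=\; \sum_{i=1}^{r}\sum_{j=1}^{k} \sigma_i^2\, C_{ij}^2,
\end{equation*}
and the bounds $\sigma_r^2 \leq \sigma_i^2 \leq \sigma_1^2$ applied termwise, followed by taking square roots.

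The final equality $\sigma_{\max}(A) = \|A\|_2$ I would simply cite as the standard definition of the spectral norm as the largest singular value. There is no real obstacle here: the only thing to be slightly careful about is using the compact SVD (so that $U$ has orthonormal but not square structure) and noting that $U^\top U = I_r$ is what lets the $U$ factor drop out of the Frobenius norm, whereas $UU^\top \neq I_d$ in general. Once this is in place, the two inequalities fall out of the same diagonal weighting argument, and both become equalities when $B$ is supported on the singular direction of $A$ corresponding to $\sigma_{\min}$ or $\sigma_{\max}$ respectively, which could be mentioned as a sharpness remark.
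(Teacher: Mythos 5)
Your proof is correct and self-contained. Note that the paper does not actually supply its own argument for this lemma; it simply labels the result as classical and cites an external reference. Your SVD-based argument is the standard way to establish it and is carried out correctly: the reduction $\|U\Sigma V^\top B\|_F = \|\Sigma V^\top B\|_F$ via $U^\top U = I_r$, the substitution $C = V^\top B$ with $\|C\|_F = \|B\|_F$, and the termwise sandwich on $\sum_{i,j}\sigma_i^2 C_{ij}^2$ are all sound, and you correctly flag that only $U^\top U = I_r$ (not $UU^\top = I_d$) is needed. One small observation: the lower bound genuinely requires the rank-$r$ hypothesis (so that $\sigma_r > 0$ and $\sigma_{\min}(A) = \sigma_r$), whereas the upper bound and the identity $\sigma_{\max}(A) = \|A\|_2$ hold for arbitrary $A$; you implicitly use this but it would be worth stating, since it explains why the lemma's hypothesis is placed where it is.
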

\begin{proof}
    The proof is classical and can be found, e.g., in~\cite{Zou2020On}.
\end{proof}

\section{Convergence Proofs for the Full-Rank Regularization Problem}
\label{appx:regular}

\begin{theorem}
\label{theorem:easy}
    Let \( W \in \mathbb{R}^{m \times n} \) and \( X \in \mathbb{R}^{n \times k} \). Suppose that \( X \) has full row rank (i.e., \( \operatorname{rank}(X) = n \)) and that the singular values of \( WX \) satisfy \( \sigma_r(WX) \neq \sigma_{r+1}(WX) \), where \( \sigma_i(\cdot) \) denotes the \( i \)-th largest singular value.
    Then, the solution \( W_0 \) to the problem~\eqref{eq:formulation}
    is unique. Furthermore, if \( W_{\mu} \) denotes the solution to the regularized problem~\eqref{eq:reg_prob}
    ,
    then the following estimate holds:
    \begin{equation*}
        \|W_{0} - W_{\mu} \|_F \leq \frac{\|W\|_2\, \|W\|_F}{\sigma_r(WX) - \sigma_{r+1}(WX)} \cdot \frac{\mu}{ \sigma_{m}(X)}
    \end{equation*}
    where, \( \| \cdot \|_2 \) denotes the spectral norm.
\end{theorem}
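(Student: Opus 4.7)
The plan is to exploit the closed-form solutions from Proposition~\ref{prop:main_without_reg} and Proposition~\ref{prop:regular_task}. By Proposition~\ref{prop:main_without_reg} the unregularised solution is $W_0 = U_r U_r^\top W$, where $U_r$ contains the first $r$ left singular vectors of $WX$. Combining Proposition~\ref{prop:regular_task} with Proposition~\ref{prop:main_without_reg} applied to the augmented matrix $\widetilde X = [X \;\; \sqrt{\mu}\, I]$ gives $W_\mu = \widetilde U_r \widetilde U_r^\top W$, where $\widetilde U_r$ consists of the first $r$ left singular vectors of $W\widetilde X$. Since $W$ factors out of both expressions,
\[
\|W_0 - W_\mu\|_F \;=\; \| (U_r U_r^\top - \widetilde U_r \widetilde U_r^\top)\, W\|_F \;\leq\; \| U_r U_r^\top - \widetilde U_r \widetilde U_r^\top\|_2 \cdot \|W\|_F ,
\]
so the task reduces to bounding the perturbation of the top-$r$ left-singular projector.

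The central trick is to replace the non-square matrices $WX$ and $W\widetilde X$ by compact $m \times n$ surrogates that share their left singular vectors and their non-zero singular values. Set $S = (XX^\top)^{1/2}$ and $\widetilde S = (XX^\top + \mu I)^{1/2}$. Since $S, \widetilde S$ are symmetric positive definite, $(WS)(WS)^\top = WXX^\top W^\top = (WX)(WX)^\top$ and $(W\widetilde S)(W\widetilde S)^\top = W(XX^\top + \mu I)W^\top = (W\widetilde X)(W\widetilde X)^\top$. Hence $U_r$ is the first $r$ left singular vectors of $WS$, $\widetilde U_r$ of $W\widetilde S$, and in particular $\sigma_i(WS) = \sigma_i(WX)$. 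Applying Davis--Kahan--Wedin (Theorem~\ref{devis}) with $A = WS$ and $\hat A = WS + W(\widetilde S - S)$ gives
\[
\| U_r U_r^\top - \widetilde U_r \widetilde U_r^\top\|_2 \;\leq\; \frac{2\,\|W(\widetilde S - S)\|_2}{\sigma_r(WX) - \sigma_{r+1}(WX)}.
\]

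To finish, I would bound $\|W(\widetilde S - S)\|_2 \leq \|W\|_2\, \|\widetilde S - S\|_2$ and estimate the latter spectrally. Diagonalising $XX^\top = U_X \diag(\sigma_1(X)^2, \dots, \sigma_n(X)^2) U_X^\top$, where all $\sigma_i(X) > 0$ by the full-row-rank hypothesis, $\widetilde S - S$ is simultaneously diagonalised with entries
\[
\sqrt{\sigma_i(X)^2 + \mu} - \sigma_i(X) \;=\; \frac{\mu}{\sqrt{\sigma_i(X)^2+\mu}\,+\,\sigma_i(X)} \;\leq\; \frac{\mu}{2\,\sigma_n(X)} ,
\]
so $\|\widetilde S - S\|_2 \leq \mu/(2\sigma_n(X))$. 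Plugging everything back in yields the claimed estimate, with the ``$\sigma_m(X)$'' in the statement being the smallest singular value, i.e.\ $\sigma_n(X)$.

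The main obstacle is obtaining the \emph{linear} dependence on $\mu$ rather than the $O(\sqrt{\mu})$ rate a naive perturbation analysis would produce. Applying Davis--Kahan--Wedin directly to $W\widetilde X = [WX \;\; \sqrt\mu\,W]$ views the extra block as an additive perturbation of spectral norm $\sqrt\mu\,\|W\|_2$, giving only a square-root rate. The surrogate-matrix substitution with $S$ and $\widetilde S$ is what buys the sharper rate, because the scalar map $t \mapsto \sqrt{t^2+\mu} - t$ is $O(\mu)$ uniformly on $[\sigma_n(X), \infty)$; this is precisely where the full-row-rank hypothesis is essential.
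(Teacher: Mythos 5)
Your proof of the error estimate is correct and follows essentially the same route as the paper's: both replace $WX$ and $W\widetilde X$ by the surrogates $W(XX^\top)^{1/2}$ and $W(XX^\top + \mu I)^{1/2}$ (the paper calls these $WH_0$ and $WH_\mu$), bound the matrix-square-root difference elementwise exactly as you do (the paper packages this as Lemma~\ref{lemma:bound_full_r}), and then invoke Davis--Kahan--Wedin. Your concluding remark explaining \emph{why} the surrogate is needed --- that applying Davis--Kahan directly to the augmented block $[WX \;\; \sqrt{\mu}\,W]$ would only give an $O(\sqrt{\mu})$ rate --- is a useful observation that the paper does not spell out. You also correctly read $\sigma_m(X)$ in the statement as the smallest singular value $\sigma_n(X)$.

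One small omission: the theorem also asserts uniqueness of $W_0$, which you do not address. The argument is short: since $\sigma_r(WX) \neq \sigma_{r+1}(WX)$, the best rank-$r$ approximation $Y_r$ of $WX$ is unique by Eckart--Young--Mirsky, and since $X$ has full row rank its transpose has trivial kernel, so the equation $W_0 X = Y_r$ determines $W_0$ uniquely; existence is supplied by Proposition~\ref{prop:main_without_reg}. Adding this closes the gap.
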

Before we proceed to the proof of Theorem~\ref{theorem:easy}, let us establish an auxiliary lemma.

\begin{lemma}
\label{lemma:bound_full_r}
        Let $X \in \mathbb{R}^{m \times n}, m \leq n, \operatorname{rank}(X) = m$. Then
    \begin{gather*}
        \| (XX^\top )^{1/2} - (XX^\top  + \mu I)^{1/2} \|_2 \leq \frac{\mu}{2 \sigma_{m}(X)}.
    \end{gather*}
\end{lemma}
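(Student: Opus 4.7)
The plan is to reduce the matrix-level bound to an eigenvalue-wise scalar bound via simultaneous diagonalization, and then apply an elementary identity for the difference of two square roots.

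First I would set $A = XX^\top \in \mathbb{R}^{m \times m}$. Since $\operatorname{rank}(X) = m$, the matrix $A$ is symmetric positive definite, with eigenvalues $\lambda_i = \sigma_i(X)^2 > 0$ for $i = 1,\dots,m$. Diagonalizing $A = Q \Lambda Q^\top$ with $\Lambda = \operatorname{diag}(\lambda_1,\dots,\lambda_m)$, both $A^{1/2}$ and $(A + \mu I)^{1/2}$ are functions of $A$ that share the same eigenbasis $Q$, so they commute and the difference takes the form
\[
(XX^\top)^{1/2} - (XX^\top + \mu I)^{1/2} = Q\bigl(\Lambda^{1/2} - (\Lambda + \mu I)^{1/2}\bigr)Q^\top.
\]
Since $Q$ is orthogonal, the spectral norm of this difference equals the largest absolute entry of the diagonal matrix $\Lambda^{1/2} - (\Lambda + \mu I)^{1/2}$, i.e.\ $\max_i |\sqrt{\lambda_i + \mu} - \sqrt{\lambda_i}|$.

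Next I would use the classical identity
\[
\sqrt{\lambda_i + \mu} - \sqrt{\lambda_i} = \frac{\mu}{\sqrt{\lambda_i + \mu} + \sqrt{\lambda_i}} \leq \frac{\mu}{2\sqrt{\lambda_i}},
\]
which is immediate by rationalizing the numerator and discarding the nonnegative $\sqrt{\mu}$ contribution in the denominator. The right-hand side is maximized over $i$ when $\lambda_i$ is smallest, namely $\lambda_m = \sigma_m(X)^2$, giving
\[
\max_i \bigl|\sqrt{\lambda_i + \mu} - \sqrt{\lambda_i}\bigr| \leq \frac{\mu}{2\sigma_m(X)},
\]
which is exactly the claimed bound.

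There is no real obstacle here, as the assumption $\operatorname{rank}(X) = m$ guarantees $\sigma_m(X) > 0$ so that the denominator is well-defined, and the simultaneous diagonalization makes every step sharp. The only thing worth being careful about is confirming that the spectral norm of a matrix of the form $Q D Q^\top$ with $Q$ orthogonal and $D$ diagonal really does equal $\max_i |D_{ii}|$, which is standard.
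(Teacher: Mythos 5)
Your proof is correct and follows essentially the same route as the paper: diagonalize $XX^\top$, use unitary invariance of $\|\cdot\|_2$ to reduce to the scalar estimate $\sqrt{\lambda+\mu}-\sqrt{\lambda} = \mu/(\sqrt{\lambda+\mu}+\sqrt{\lambda}) \leq \mu/(2\sqrt{\lambda})$, and take the maximum at the smallest eigenvalue $\sigma_m(X)^2$. One tiny phrasing slip: the denominator lower bound $\sqrt{\lambda_i+\mu}+\sqrt{\lambda_i}\geq 2\sqrt{\lambda_i}$ comes from $\sqrt{\lambda_i+\mu}\geq\sqrt{\lambda_i}$, not from ``discarding a $\sqrt{\mu}$ contribution'' (since $\sqrt{\lambda_i+\mu}\neq\sqrt{\lambda_i}+\sqrt{\mu}$), but this is immaterial to the argument.
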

\begin{proof}
Let $U\Lambda U^\top  = XX^\top $ define the eigendecomposition of a symmetric positive definite matrix. Then, $U$ is orthogonal, and the elements of $\Lambda$ are positive.
\begin{gather*}
        \| (XX^\top )^{1/2} - (XX^\top  + \mu I)^{1/2} \|_2 = \| U\Lambda^{1/2}U^\top  - (U\Lambda U^\top  + \mu UU^\top )^{1/2}\|_2 = \\ = \| U(\Lambda^{1/2} - (\Lambda + \mu I)^{1/2})U^\top \|_2 = \left[ \|\cdot\|_2~\text{is unitarily invariant}\right] = \\ = \| \Lambda^{1/2} - (\Lambda+\mu I)^{1/2}\|_2 = \max_{\lambda \in \sigma(XX^\top )} \left( \sqrt{\lambda + \mu} - \sqrt{\lambda} \right).
\end{gather*}
Note that
\begin{gather*}
          \sqrt{\lambda + \mu} - \sqrt{\lambda} = \frac{(\sqrt{\lambda + \mu} - \sqrt{\lambda})(\sqrt{\lambda + \mu} + \sqrt{\lambda})}{\sqrt{\lambda + \mu} + \sqrt\lambda} = \frac{\mu}{\sqrt{\lambda + \mu} + \sqrt \lambda} \leq \frac{\mu}{2\sqrt\lambda}.
\end{gather*}
Then
\begin{gather*}
        \| (XX^\top )^{1/2} - (XX^\top  + \mu I)^{1/2} \|_2 = \max_{\lambda \in \sigma(XX^\top )} \left( \sqrt{\lambda + \mu} - \sqrt{\lambda} \right) \leq \\ \leq \max_{\lambda \in \sigma(XX^\top )} \frac{\mu}{2\sqrt\lambda} = \max_{\sigma \in \sigma(X)} \frac{\mu}{2\sigma} = \frac{\mu}{2\sigma_m(X)}.
\end{gather*}
\end{proof}

\begin{proof}[Proof of Theorem~\ref{theorem:easy}]
    The uniqueness of $W_0$ follows from the fact that if $\sigma_r(WX) \neq \sigma_{r+1}(WX)$, then the rank-$r$ low-rank approximation $Y_r$ of the matrix $WX$ is unique (by Eckart-Young-Mirsky Theorem~\ref{eckart_young}). Hence, $W_0$ is a solution if and only if $W_0X = Y_r$. Moreover, since the kernel of $X$ is empty, if such a matrix $W_0$ exists, it must be unique. However, by Proposition~\ref{prop:main_without_reg}
    , such a matrix indeed exists.

We now establish the estimate from the theorem’s condition. By Proposition~\ref{prop:main_without_reg}
, we obtain
\begin{gather*}
    W_0 = U_0 U_0^\top  W,
\end{gather*}
where $U_0$ denotes the first $r$ left singular vectors of $WH_0$, and $H_0 = (XX^\top )^{1/2}$. Analogously, using Proposition~\ref{prop:regular_task}
, we have
\begin{gather*}
    W_{\mu} = U_{\mu} U_{\mu}^\top  W,
\end{gather*}
where $U_\mu$ denotes the first $r$ left singular vectors of $WH_\mu$, and $H_\mu$ = $(XX^\top  + \mu I)^{1/2}$. From Lemma~\ref{lemma:bound_full_r} it follows that
\begin{gather*}
    \| H_0 - H_{\mu} \|_2 \leq \frac{\mu}{2\,\sigma_{m}(X)}.
\end{gather*}
Consequently,
\begin{gather*}
    \| WH_0 - WH_{\mu} \|_2 \leq \|W\|_2 \|H_0 - H_{\mu}\|_2 \leq \frac{\|W\|_2}{2\sigma_{m}(X)}\mu.
\end{gather*}
By applying Davis-Kahan Theorem~\ref{devis}, we obtain
\begin{gather*}
    \| U_0U_0^\top  - U_{\mu}U_{\mu}^\top \|_2 \leq \frac{2\|WH_0 - WH_{\mu}\|_2}{\sigma_r(WH) - \sigma_{r+1}(WH)} 
    \leq \\ \leq \frac{2\|W\|_2}{2\sigma_{m}(X)\left(\sigma_r(WH) - \sigma_{r+1}(WH)\right)} \mu = \frac{\|W\|_2}{\sigma_{m}(X)(\sigma_r(WH) - \sigma_{r+1}(WH))} \mu.
\end{gather*}
Thus,
\begin{gather*}
    \|W_0 - W_{\mu} \|_F 
    =  \|U_0U_0^\top W - U_{\mu}U_{\mu}^\top W\|_F  = \|(U_0U_0^\top  - U_{\mu}U_{\mu}^\top )W\|_F \leq \\
    \leq \| U_0U_0^\top  - U_{\mu}U_{\mu}^\top \|_2 \|W\|_F
    \leq \frac{\|W\|_2 \|W\|_F}{\sigma_m(X)(\sigma_r(WH) - \sigma_{r+1}(WH))} \mu.
\end{gather*}
\end{proof}

\section{Convergence Proofs (Without the Full-Rank Condition)}\label{appx:hard_part}

\begin{proof}[Proof of Theorem~\ref{theorem:hard}
]
By Proposition~\ref{prop:main_without_reg}
, we obtain
\begin{gather*}
    W_0 = U_0 U_0^\top  W,
\end{gather*}
where $U_0$ denotes the first $r$ left singular vectors of $WX$, and $H_0 = (XX^\top )^{1/2}$. Analogously, using Proposition~\ref{prop:regular_task}
, we have
\begin{gather*}
    W_{\mu} = U_{\mu} U_{\mu}^\top  W,
\end{gather*}
where $U_\mu$ denotes the first $r$ left singular vectors of $WH_\mu$, and $H_\mu$ = $(XX^\top  + \mu I)^{1/2}$. 
However, we can get the matrices $U_0$ and $U_{\mu}$ are defined as the matrices of the first $r$ left singular vectors obtained from the singular value decompositions:
\[
U_0 \leftarrow \mathrm{SVD}\bigl(W XX^\top W^\top\bigr), 
\qquad 
U_\mu \leftarrow \mathrm{SVD}\bigl(W (XX^\top + \mu I) W^\top \bigr).
\]
Here we use the fact that the left singular vectors of a matrix $A$ coincide with the eigenvectors vectors (same that left singular in this case) of $AA^\top$.

Consider the perturbation of the matrix $W XX^\top W^\top$:
\begin{gather*}
\|WXX^\top W^\top - W(XX^\top + \mu I)W^\top\|_2
= \mu \|WW^\top\|_2
= \mu \|W\|_2^2 .
\end{gather*}

By Applying Davis-Kahan Theorem~\ref{devis}, we obtain
\begin{gather*}
\|U_0U_0^\top - U_\mu U_\mu^\top\|_2 
\le 
\frac{2 \|WXX^\top W^\top - W(XX^\top + \mu I)W^\top\|_2}
{\sigma_r(WXX^\top W^\top)-\sigma_{r+1}(WXX^\top W^\top)}.
\end{gather*}

Since $\sigma_k(WXX^\top W^\top) = \sigma_k^2(WX)$, this yields
\begin{gather*}
\|U_0U_0^\top - U_\mu U_\mu^\top\|_2 
\le 
\frac{2 \|W\|_2^2}{\sigma_r^2(WX)-\sigma_{r+1}^2(WX)} \, \mu .
\end{gather*}

Combining this bound with $W_0 = U_0U_0^\top W$ and $W_\mu = U_\mu U_\mu^\top W$, we arrive at
\begin{align*}
\|W_0 - W_\mu\|_F 
&= \|(U_0U_0^\top - U_\mu U_\mu^\top)W\|_F \\
&\le \|U_0U_0^\top - U_\mu U_\mu^\top\|_2 \, \|W\|_F \\
&\le \frac{2\|W\|_2^2 \|W\|_F}
{\sigma_r^2(WX)-\sigma_{r+1}^2(WX)} \, \mu .
\end{align*}


\end{proof}

\section{Implementation Details}
\label{appx:hypps}

\begin{table}[h]
\caption{Choice of hyperparameters for different methods, which were applied to the matrices Q, K, V, O, Up, Down.}
\centering
\begin{tabular}{lcccc}
\toprule
\textbf{Hyperparameter} & \textbf{LoRA} & \textbf{PiSSA} & \textbf{CorDA} & \textbf{COALA} \\
\midrule
Rank $r$ & 8 & 8 & 8 & 8 \\
$\alpha$ & 12 & 4 & $\frac{1}{2}$ & 8 \\
Dropout & 0.0 & 0.0 & 0.0 & 0.0 \\
Optimizer & AdamW & AdamW & AdamW & AdamW \\
Learning Rate & $1 \times 10^{-4}$ & $1 \times 10^{-4}$ & $1 \times 10^{-4}$ & $1 \times 10^{-4}$ \\
LR Scheduler & Cosine & Cosine & Cosine & Cosine \\
Batch Size & 16 & 16 & 16 & 16 \\
Warmup Steps & 100 & 100 & 100 & 100 \\
Epochs & 1 & 1 & 1 & 1 \\
\bottomrule
\end{tabular}
\label{tab:params}
\end{table}

\textit{Fine-tuning:}
All training runs were conducted on the same dataset consisting of 40,000 examples, presented in the same order across all experiments. Training a single model required approximately 10 hours, with an additional 2 hours allocated for evaluating the response accuracy on the validation dataset. All experiments were performed on an NVIDIA Tesla T4 GPU with Driver Version 535.183.01 and CUDA Version 12.2. The parameter $\alpha$ was individually selected for each initialization method since the norms resulting from different initialization methods varied, impacting the gradient norms. See Table~\ref{tab:params} for the other parameters.

\textit{Compression:}
We compressed the Q, K, V, O, Up and Down matrices, approximating each of them with the same rank $r$ to achieve the desired parameter ratio.

\section{Examples}
In this section, we present examples supporting various assertions of our work.

\begin{example}[Loss of Precision When Computing the Gram Matrix~\cite{demmel1997applied}]
\label{appx:example_error}

When ``squaring'' a matrix and then taking square root, we can lose accuracy in computing its smaller singular values. This phenomenon can be illustrated on the following matrix:

\[
X = \begin{pmatrix} 1 & 1 \\ 0 & \sqrt{\varepsilon} \end{pmatrix},
\]

where \(\varepsilon = {\varepsilon_m}/{2} \), and $\varepsilon_m$ is a small positive number, representing the machine epsilon (the smallest number such that \(1 + \varepsilon_m \neq 1\) in machine arithmetic).

First, we compute the singular values of matrix \(X\). The singular values are the square roots of the eigenvalues of \(X^\top X\):
\[
X^\top X = \begin{pmatrix} 1 & 0 \\ 1 & \sqrt{\varepsilon} \end{pmatrix} \begin{pmatrix} 1 & 1 \\ 0 & \sqrt{\varepsilon} \end{pmatrix} = \begin{pmatrix}
1 & 1 \\
1 & 1 + \varepsilon
\end{pmatrix}.
\]
\[
\det(X^\top X - \lambda I) = 0.
\]
This leads to:
\[
\lambda^2 - (2 + \varepsilon)\lambda + \varepsilon = 0.
\]
\[
\lambda = \frac{2 + \varepsilon \pm \sqrt{(2 + \varepsilon)^2 - 4\varepsilon}}{2}.
\]
Thus, the eigenvalues are:
\[
\lambda_1 = \frac{2 + \varepsilon + 2 + \frac{\varepsilon^2}{4}}{2} = 2 + \frac{\varepsilon}{2} + \frac{\varepsilon^2}{8} + \mathcal{O}(\varepsilon^3),
\]
\[
\lambda_2 = \frac{2 + \varepsilon - \left(2 + \frac{\varepsilon^2}{4}\right)}{2} = \frac{\varepsilon}{2} - \frac{\varepsilon^2}{8} + \mathcal{O}(\varepsilon^3).
\]

The singular values of $X$ are the square roots of the eigenvalues:
\[
\sigma_1 = \sqrt{\lambda_1} = \sqrt{2 + \frac{\varepsilon}{2} + \frac{\varepsilon^2}{8}} = \sqrt{2} \cdot \sqrt{1 + \frac{\varepsilon}{4} + \frac{\varepsilon^2}{16}} = \sqrt{2} \left(1 + \frac{\varepsilon}{8} - \frac{\varepsilon^2}{128}\right) + O(\varepsilon^3).
\]
\[
\sigma_2 = \sqrt{\lambda_2} = \sqrt{\frac{\varepsilon}{2} - \frac{\varepsilon^2}{8}} = \sqrt{\frac{\varepsilon}{2}} \cdot \sqrt{1 - \frac{\varepsilon}{4}} =
\sigma_2 = \frac{\sqrt{\varepsilon}}{\sqrt{2}} \left(1 - \frac{\varepsilon}{8} - \frac{\varepsilon^2}{128}\right) + O(\varepsilon^{3/2}).
\]

Finally,

\[
\sigma_1 = \sqrt{2} + \frac{\sqrt{2}}{8} \varepsilon + O(\varepsilon^2),
\]
\[
\sigma_2 = \frac{\sqrt{\varepsilon}}{\sqrt{2}} - \frac{\sqrt{\varepsilon}}{8\sqrt{2}} \varepsilon + O(\varepsilon^{3/2}).
\]

However, in machine arithmetic, we will obtain:
\[
XX^\top = \begin{pmatrix}
    1 & 1\\
    1 & 1
\end{pmatrix},
\]
and also
\[
\tilde\sigma_1(X) = \sqrt2,
\quad
\tilde\sigma_2(X) = 0.
\]
As a result,
\[
|\sigma_2(X) - \tilde\sigma_2(X)| = \mathcal{O}(\sqrt\varepsilon).
\]
So we lost approximately square root of machine epsilon. 
    
\end{example}

\begin{example}[Dependence on \( gap^{-1} \)]
\label{appx:example_gap}
\begin{figure}[H]
  \centering
  \includegraphics[width=0.7\textwidth]{./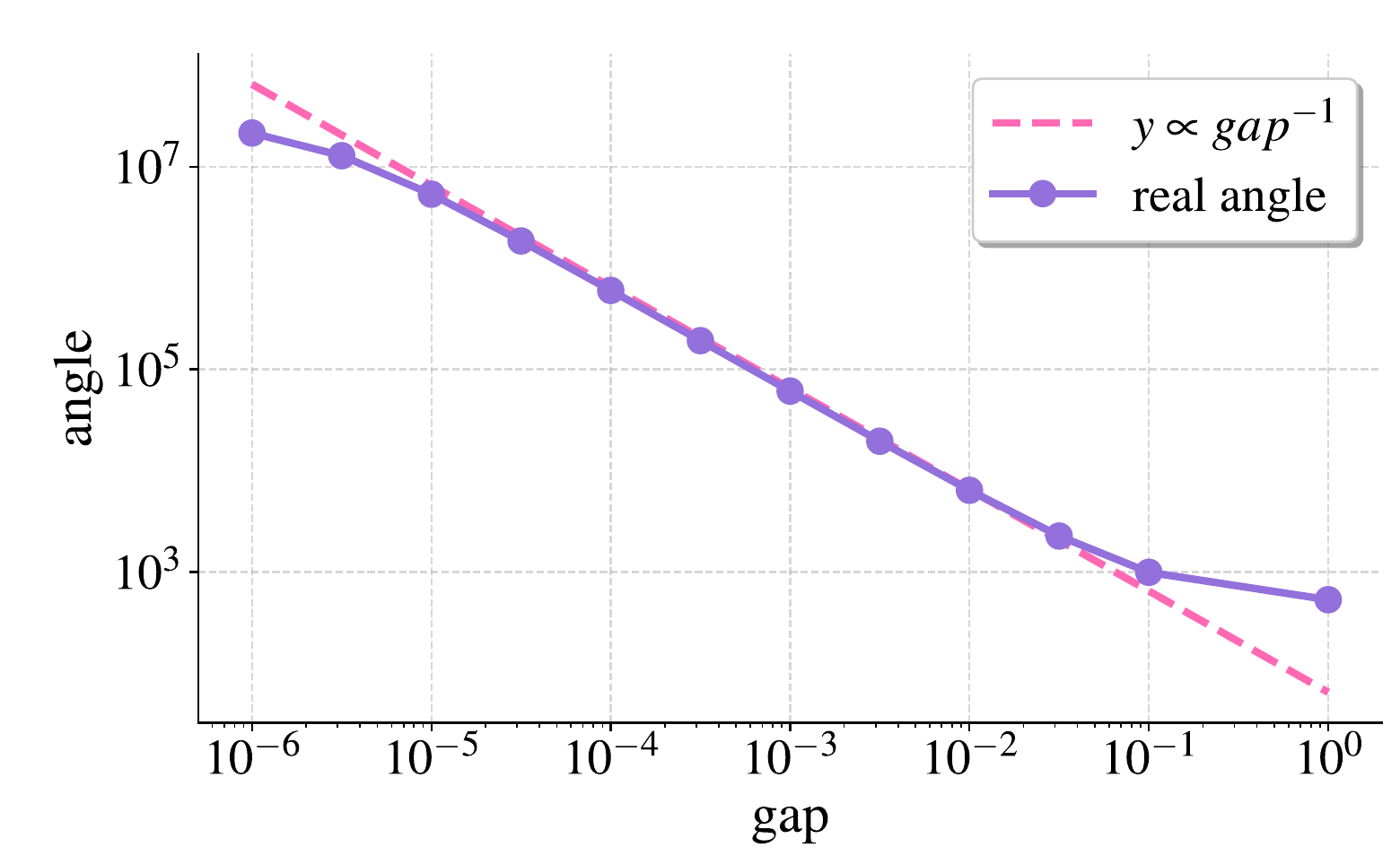}
  \caption{Figure illustrating the dependence of the convergence slope angle of regularized models compared to non-regularized models, with all other factors held constant.}
\label{fig:gap}
\end{figure}
\end{example}
We fixed all dimensional parameters, left and right singular vectors of the matrix \( WX \), as well as all singular values except for the \( r \)-th and \( (r+1) \)-th ones. Then, we varied the difference between \( \sigma_r(WX) \) and \( \sigma_{r+1}(WX) \). The convergence rate of the regularized solution to the unregularized solution with respect to this gap is presented Figure~\ref{fig:gap}.
We observe that the dependence on the gap is intrinsic to the problem and that we catch the correct asymptotic behaviour in our theoretical bound in the full rank case.

\end{document}